\theoremstyle{plain}
\newtheorem{theorem}{Theorem}[section]
\newtheorem{proposition}[theorem]{Proposition}
\theoremstyle{definition}
\newtheorem{definition}[theorem]{Definition}
\theoremstyle{remark}
\newcommand*\diff{\mathop{}\!\mathrm{d}}
\title{Module-wise Training of Neural Networks via the Minimizing Movement Scheme}
\author{%
  Skander Karkar \\
  Criteo, Sorbonne Université \\
  \And
   Ibrahim Ayed \\
  Sorbonne Université, Thales \\
  \AND
  Emmanuel de Bézenac \\
  ETH Zurich \\
  \And
  Patrick Gallinari \\
  Criteo, Sorbonne Université \\
}
\begin{document}

\maketitle

\begin{abstract}
Greedy layer-wise or module-wise training of neural networks is compelling in constrained and on-device settings where memory is limited, as it circumvents a number of problems of end-to-end back-propagation. However, it suffers from a stagnation problem, whereby early layers overfit and deeper layers stop increasing the test accuracy after a certain depth. We propose to solve this issue by introducing a module-wise regularization inspired by the minimizing movement scheme for gradient flows in distribution space. We call the method TRGL for Transport Regularized Greedy Learning and study it theoretically, proving that it leads to greedy modules that are regular and that progressively solve the task. Experimentally, we show improved accuracy of module-wise training of various architectures such as ResNets, Transformers and VGG, when our regularization is added, superior to that of other module-wise training methods and often to end-to-end training, with as much as $60\%$ less memory usage.
\end{abstract}

\section{Introduction}

End-to-end backpropagation is the standard training method of neural networks. However, it requires storing the whole model and computational graph during training, which requires large memory consumption. It also prohibits training the layers in parallel. Dividing the network into modules, a module being made up of one or more layers, accompanied by auxiliary classifiers, and greedily solving module-wise optimization problems sequentially (i.e. one after the other fully) or in parallel (i.e. at the same time batch-wise), consumes much less memory than end-to-end training as it does not need to store as many activations at the same time, and when done sequentially, only requires loading and training one module (so possibly one layer) at a time. Module-wise training has therefore been used in constrained settings in which end-to-end training can be impossible such as training on mobile devices \cite{sensors1,sensors2} and dealing with very large whole slide images \cite{gigapixel}. When combined with batch buffers, parallel module-wise training also allows for parallel training of the modules \cite{belilovsky2}. Despite its simplicity, module-wise training has been recently shown to scale well \cite{belilovsky2,sedona,infopro,nokland}, outperforming more complicated alternatives to end-to-end training such as synthetic \cite{dni1,dni2} and delayed \cite{ddg,features-replay} gradients, while having superior memory savings.


In a classification task, module-wise training splits the network into successive modules, a module being made up of one or more layers. Each module takes as input the output of the previous module, and each module has an auxiliary classifier so that a local loss can be computed, with backpropagation happening only inside the modules and not between them (see Figure \ref{fig:module-wise} below). 

The main drawback of module-wise training is the well-documented \textit{stagnation problem} observed in \cite{cascade,belilovsky1,infopro,sedona}, whereby early modules overfit and learn more discriminative features than end-to-end training, destroying task-relevant information, and deeper modules don't improve the test accuracy significantly, or even degrade it, which limits the deployment of module-wise training. We further highlight this phenomenon in Figures \ref{fig:par-seq} and \ref{fig:mls-par} in Section \ref{subsec:exp-fig}. To tackle this issue, InfoPro \cite{infopro} propose to maximize the mutual information that each module keeps with the input, in addition to minimizing the loss. \cite{belilovsky1} make the auxiliary classifier deeper and Sedona \cite{sedona} make the first module deeper. These last two methods lack a theoretical grounding, while InfoPro requires a second auxiliary network for each module besides the classifier. We propose a different perspective, leveraging the analogy between residual connections and the Euler scheme for ODEs \cite{weinan}. To preserve input information, we minimize the kinetic energy of the modules along with the training loss. Intuitively, this forces the modules to change their input as little as possible. We leverage connections with the theories of gradient flows in distribution space and optimal transport to analyze our method theoretically.

\begin{figure}[H]
\begin{center}
\includegraphics[width=\textwidth]{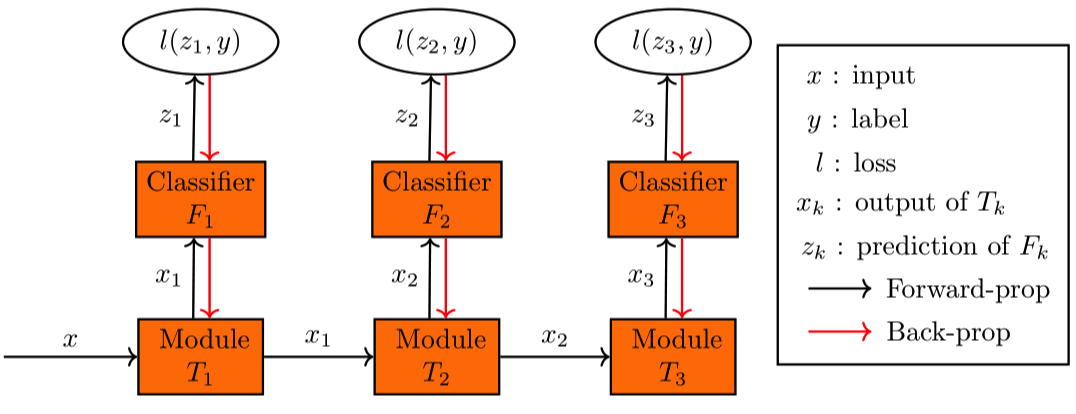}
\end{center}
\caption{Module-wise training.}
\label{fig:module-wise}
\end{figure}

Our approach is particularly well-adapted to networks that use residual connections such as ResNets \cite{resnet2,resnet1}, their variants (e.g. ResNeXt \cite{resnext}, Wide ResNet \cite{wide}, EfficientNet \cite{efficient-net} and MobileNetV2 \cite{mobile-net}) and vision transformers that are made up essentially of residual connections \cite{swin,vit}, but is immediately usable on any network where many layers have the same input and output dimension such as VGG \cite{vgg}. Our contributions are the following: 
\begin{itemize}
\item We propose a new method for module-wise training. Being a regularization, it is lighter than many recent state-of-the-art methods (PredSim \cite{nokland}, InfoPro \cite{infopro}) that train another auxiliary network besides the auxiliary classifier for each module.
\item We theoretically justify our method, proving that it is a transport regularization that forces the module to be an optimal transport map making it more regular and stable. We also show that it amounts to a discretization of the gradient flow of the loss in probability space, which means that the modules progressively minimize the loss and explains why the method avoids the accuracy collapse observed in module-wise training. 
\item Experimentally, we consistently improve the test accuracy of module-wise trained networks (ResNets, VGG and Swin-Transformer) beating 8 other methods, in sequential and parallel module-wise training, and also in \textit{multi-lap sequential} training, a variant of sequential module-wise training that we introduce and that performs better in many cases. In particular, our regularization makes parallel module-wise training superior or comparable in accuracy to end-to-end training, while consuming $10\%$ to $60\%$ less memory.
\end{itemize}

\section{Transport-regularized module-wise training}\label{sec:method}

The typical setting of (sequential) module-wise training for minimizing a loss $L$, is, given a dataset $\mathcal{D}$, to solve one after the other, for $1 {\leq} k {\leq} K$, Problems
\begin{equation}
    \label{eq:layerwise-problem}
    (T_k,F_k) \in \arg \min_{T,F} \sum_{x \in \mathcal{D}} L(F, T \circ G_{k-1}(x))
\end{equation}
where ${G_k = T_k \circ ... \circ T_1}$ for ${1 {\leq} k {\leq} K}$, ${G_0 {=} \texttt{id}}$, $T_k$ is the module (one or many layers) and $F_k$ is an auxiliary classifier. Module $T_k$ receives the output of module $T_{k-1}$, and auxiliary classifier $F_k$ computes the prediction from the output of $T_k$ so the loss can be computed. The inputs are $x$ and $L$ has access to their labels $y$ to calculate the loss. i.e. $L(F, T \circ G_{k-1}(x)) = l(F \circ T \circ G_{k-1}(x), y)$ where $l$ is a machine learning loss such as cross-entropy. See Figure \ref{fig:module-wise}. The final network trained this way is ${F_K \circ G_K}$. But, at inference, we can stop at any depth $k$ and use ${F_k \circ G_k}$ if it performs better. Indeed, an intermediate module often performs as well or better than the last module because of the early overfitting and subsequent stagnation or collapse problem of module-wise training \cite{cascade,belilovsky1,infopro,sedona}. 

We propose below in \eqref{eq:regularized-layerwise-problem} a regularization that avoids the destruction of task-relevant information by the early modules by forcing them to minimally modify their input. Proposition \ref{prop:link} proves that by using our regularization \eqref{eq:regularized-layerwise-problem}, we are indeed making the modules build upon each other to solve the task, which is the property we desire in module-wise training, as the modules now act as successive proximal optimization steps in the \textit{minimizing movement scheme} optimization algorithm for maximizing the separability of the data representation. The background on optimal transport (OT), gradient flows and the minimizing movement scheme is in Appendices \ref{appsec:ot} and \ref{appsec:gflow}.

\subsection{Method statement}

To keep greedily-trained modules from overfitting and destroying information needed later, we penalize their kinetic energy to force them to preserve the geometry of the problem as much as possible. If each module is a single residual block (that is a function $T {=} \texttt{id} {+} r$, which includes many transformer architectures \cite{swin,vit}), its kinetic energy is simply the squared norm of its residue $r {=} T {-} \texttt{id}$, which we add to the loss $L$ in the target of the greedy problems \eqref{eq:layerwise-problem}. All layers that have the same input and output dimension can be rewritten as residual blocks and the analysis applies to a large variety of architectures such as VGG \cite{vgg}. Given $\tau {>} 0$, we now solve, for $1 {\leq} k {\leq} K$, Problems 
\begin{equation}
    \label{eq:regularized-layerwise-problem}
    (T_k^\tau, F_k^\tau) \in \arg\min_{T,F} 
    \sum_{x \in \mathcal{D}}  L(F,T \circ G_{k-1}^\tau(x))  + \frac{1}{2 \tau} \| T \circ G_{k-1}^\tau(x) {-} G_{k-1}^\tau(x) \|^2 
\end{equation}
where $G^\tau_k {=} T^\tau_k {\circ} .. {\circ} T^\tau_1$ for $1 {\leq} k {\leq} K$ and $G^\tau_0 {=} \texttt{id}$. The final network is $F^\tau_K {\circ} G^\tau_K$. Intuitively, this biases the modules towards moving the points as little as possible, thus at least keeping the performance of the previous module. Residual connections are already biased towards small displacements and this bias is desirable and should be encouraged \cite{iterative,small-step,hauser,batchnorm,lap}. But the method can be applied to any module where $T(x)$ and $x$ have the same dimension so that $T(x) {-} x$ can be computed.

To facilitate the theoretical analysis, we rewrite the method in a more general formulation using data distribution $\rho$, which can be discrete or continuous, and the distribution-wide loss $\mathcal{L}$ that arises from the point-wise loss $L$. Then Problem \eqref{eq:regularized-layerwise-problem} is equivalent to Problem 
\begin{equation}
  \label{eq:regularized-layerwise-problem-continuous}
  (T_k^{\tau}, F_k^{\tau})  \in
  \arg\min_{T,F}  \ \mathcal{L}(F, T_\sharp \rho_k^{\tau}) + \frac{1}{2 \tau} \int_{\Omega} \| T(x) - x \|^2 \diff \rho_k^{\tau}(x) 
\end{equation}
with $\rho_{k+1}^{\tau}{=}(T_k^{\tau})_\sharp \rho_k^{\tau}$, $\rho^\tau_1{=}\rho$ and $\mathcal{L}(F, T_\sharp \rho_k^{\tau}) = \int L(F,T(x)) \diff \rho_k^{\tau}(x) = \int L(F,z) \diff T_\sharp \rho_k^{\tau}(x)$. 

\subsection{Link with the minimizing movement scheme}\label{subsec:gflow}

We now formulate our main result: solving Problems \eqref{eq:regularized-layerwise-problem-continuous} is equivalent to following a \textit{minimizing movement scheme (MMS)} \cite{santambrogio2016} in distribution space for minimizing ${\mathcal{Z}(\mu) := \min_F \mathcal{L}(F,\mu)}$, which is the loss of the best classifier. If we are limited to linear classifiers, $\mathcal{Z}(\rho^\tau_k)$ represents the linear separability of the representation $\rho^\tau_k$ at module $k$ of the data distribution $\rho$. The MMS, introduced in \cite{italian,min-movements}, is a metric counterpart to Euclidean gradient descent for minimizing functionals over distributions. In our case, $\mathcal{Z}$ is the functional we want to minimize. We define the MMS below in Definition \ref{def:mms}


The distribution space we work in is the metric Wasserstein space ${\mathbb{W}_2(\Omega) = (\mathcal{P}(\Omega), W_2)}$, where ${\Omega \subset \mathbb{R}^d}$ is a convex compact set, $\mathcal{P}(\Omega)$ is the set of probability distributions over $\Omega$ and $W_2$ is the Wasserstein distance over $\mathcal{P}(\Omega)$ derived from the optimal transport problem with Euclidean cost:
\begin{equation}
    \label{eq:wass-distance}
    W_2^2(\alpha, \beta) = \min_{T \text{ s.t. } T_\sharp \alpha = \beta} \int_{\Omega} \|T(x) - x \|^2 \diff \alpha (x)
\end{equation}
where we assume that $\partial \Omega$ is negligible and that the distributions are absolutely continous. 

\begin{definition}
\label{def:mms}
Given $\mathcal{Z}: \mathbb{W}_2(\Omega) \rightarrow \mathbb{R}$, and starting from $\rho^\tau_1 \in \mathcal{P}(\Omega)$, the Minimizing Movement Scheme (MMS) takes proximal steps for minimizing $\mathcal{Z}$. It is s given by
\begin{equation}
\label{eq:wass-min-movement1}
    \rho^{\tau}_{k+1} \in \arg \min_{\rho \in \mathcal{P}(\Omega)} \ \mathcal{Z}(\rho) + \frac{1}{2 \tau} W_2^2(\rho, \rho^{\tau}_k)
\end{equation}
\end{definition}

The MMS \eqref{eq:wass-min-movement1} can be seen as a non-Euclidean implicit Euler step for following the gradient flow of $\mathcal{Z}$, and $\rho^{\tau}_k$ converges to a minimizer of $\mathcal{Z}$ under some conditions (see the end of this section).

So under the mentioned assumptions on $\Omega$ and absolute continuity of the distributions, we have that Problems \eqref{eq:regularized-layerwise-problem-continuous} are equivalent to the minimizing movement scheme \eqref{eq:wass-min-movement1}:
\begin{proposition}
\label{prop:link}
The distributions $\rho_{k+1}^{\tau} = (T_k^{\tau})_\sharp \rho_k^{\tau}$, where the functions $T_k^{\tau}$ are found by solving \eqref{eq:regularized-layerwise-problem-continuous} and $\rho_1^{\tau} = \rho$ is the data distribution, coincide with the MMS \eqref{eq:wass-min-movement1} for $\mathcal{Z} = \min_F \mathcal{L}(F,.)$.
\end{proposition}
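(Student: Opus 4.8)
The plan is to eliminate the auxiliary classifier from \eqref{eq:regularized-layerwise-problem-continuous} and then recognize the remaining problem as exactly the Monge form of the Wasserstein proximal step \eqref{eq:wass-min-movement1}. First I would split the joint minimization over $(T,F)$: since the kinetic term $\frac{1}{2\tau}\int_\Omega \|T(x)-x\|^2\diff\rho_k^\tau(x)$ does not involve $F$, minimizing over $F$ first collapses the loss term to $\min_F \mathcal{L}(F,T_\sharp\rho_k^\tau)=\mathcal{Z}(T_\sharp\rho_k^\tau)$. Thus $T_k^\tau$ solves \eqref{eq:regularized-layerwise-problem-continuous} iff
\begin{equation*}
T_k^\tau \in \arg\min_T \ \mathcal{Z}(T_\sharp\rho_k^\tau) + \frac{1}{2\tau}\int_\Omega \|T(x)-x\|^2\diff\rho_k^\tau(x),
\end{equation*}
with $F_k^\tau$ then being a best classifier for $\rho_{k+1}^\tau=(T_k^\tau)_\sharp\rho_k^\tau$.

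Next I would reparametrize the MMS step \eqref{eq:wass-min-movement1} through the optimal-transport identity \eqref{eq:wass-distance}. Under the standing assumptions ($\Omega$ convex compact, $\partial\Omega$ negligible, $\rho_k^\tau$ absolutely continuous), Brenier's theorem (Appendix \ref{appsec:ot}) guarantees that the Monge infimum in \eqref{eq:wass-distance} is attained, so for every $\rho\in\mathcal{P}(\Omega)$ we have $W_2^2(\rho,\rho_k^\tau)=\min\{\int_\Omega\|T(x)-x\|^2\diff\rho_k^\tau(x) : T_\sharp\rho_k^\tau=\rho\}$, and conversely every such $T$ realizes some $\rho=T_\sharp\rho_k^\tau$. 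Substituting and collapsing the nested minimization,
\begin{align*}
\min_{\rho\in\mathcal{P}(\Omega)} \ \mathcal{Z}(\rho) + \frac{1}{2\tau}W_2^2(\rho,\rho_k^\tau)
&= \min_{\rho} \ \min_{T:\,T_\sharp\rho_k^\tau=\rho} \ \mathcal{Z}(T_\sharp\rho_k^\tau) + \frac{1}{2\tau}\int_\Omega\|T(x)-x\|^2\diff\rho_k^\tau(x) \\
&= \min_T \ \mathcal{Z}(T_\sharp\rho_k^\tau) + \frac{1}{2\tau}\int_\Omega\|T(x)-x\|^2\diff\rho_k^\tau(x),
\end{align*}
the last step dropping the outer minimization since each $T$ already determines its target $\rho=T_\sharp\rho_k^\tau$. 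Hence the two problems share the same optimal value, $T$ is optimal for the reduced module problem iff $\rho=T_\sharp\rho_k^\tau$ is optimal for \eqref{eq:wass-min-movement1} (and any optimal transport map from $\rho_k^\tau$ to an MMS minimizer is an optimal $T$), so in particular $\rho_{k+1}^\tau=(T_k^\tau)_\sharp\rho_k^\tau$ is a minimizer in \eqref{eq:wass-min-movement1}. Running this from $\rho_1^\tau=\rho$ by induction on $k$ gives the claim.

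I expect the delicate point to be the reparametrization/interchange step: it relies on the class of admissible $T$ in \eqref{eq:regularized-layerwise-problem-continuous} being rich enough to contain, for each candidate target $\rho$, a minimizer of the Monge problem defining $W_2^2(\rho,\rho_k^\tau)$ — precisely where the hypotheses on $\Omega$ and the absolute continuity of $\rho_k^\tau$ enter, via Brenier's existence theorem for optimal transport maps. A secondary subtlety is keeping the induction consistent, namely that each $\rho_k^\tau$ remains absolutely continuous so the step-$k$ argument applies; I would simply carry this as part of the stated "absolute continuity of the distributions" assumption rather than track regularity of successive Brenier pushforwards. The remaining manipulations (moving $\min_F$ inside, collapsing nested minima) are routine.
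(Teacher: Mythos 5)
Your proof is correct and follows essentially the same route as the paper's: eliminate the classifier via $\mathcal{Z}=\min_F\mathcal{L}(F,\cdot)$ and reparametrize the Wasserstein proximal term $W_2^2(\rho,\rho_k^\tau)$ in Monge form using the absolute continuity of $\rho_k^\tau$ to guarantee existence of transport maps, then collapse the nested minimizations. The paper does the two reductions in the opposite order and spends an extra sentence distinguishing the problem written with $W_2^2(T_\sharp\rho_k^\tau,\rho_k^\tau)$ from the one written with $\int\|T(x)-x\|^2\diff\rho_k^\tau$ (same optimal value, but the latter's minimizer is forced to be an OT map), which is the same observation you make implicitly; your explicit remarks about richness of the admissible class of $T$ and propagation of absolute continuity along $k$ are reasonable cautions that the paper handles by assumption.
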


\begin{proof} 
The minimizing movement scheme (\ref{eq:wass-min-movement1}) is equivalent to taking $\rho_{k+1}^{\tau} = (T_k^{\tau})_\sharp \rho_k^{\tau}$ where 
\begin{equation}
     T^\tau_k \in \arg \min_{T:\Omega \rightarrow \Omega} \mathcal{Z}(T_\sharp \rho^{\tau}_k) + \frac{1}{2 \tau} W_2^2(T_\sharp \rho^{\tau}_k, \rho^{\tau}_k)
    \label{eq:wass-min-movement2}
\end{equation}
under conditions that guarantee the existence of a transport map between $\rho_k^{\tau}$ and any other measure, and absolute continuity of $\rho_k^{\tau}$ suffices, and the loss can ensure that $\rho_{k+1}^{\tau}$ is also absolutely continuous. Among the functions $T^\tau_k$ that solve problem (\ref{eq:wass-min-movement2}), is the optimal transport map from $\rho_k^{\tau}$ to $\rho_{k+1}^{\tau}$. To solve specifically for this optimal transport map, we have to solve the equivalent Problem 
\begin{equation}
    \label{eq:wass-min-movement3}
   T_k^{\tau} \in \arg\min_T \mathcal{Z}(T_\sharp \rho_k^{\tau}) + \frac{1}{2 \tau} \int_{\Omega} \| T(x) - x \|^2 \diff \rho_k^{\tau}(x)
\end{equation}
Problems (\ref{eq:wass-min-movement2}) and (\ref{eq:wass-min-movement3}) have the same minimum value, but the minimizer of (\ref{eq:wass-min-movement3}) is now an optimal transport map between $\rho_k^{\tau}$ and $\rho_{k+1}^{\tau}$. This is immediate from the definition (\ref{eq:wass-distance}) of the $W_2$ distance. Equivalently minimizing first over $F$ and then over $T$ in (\ref{eq:regularized-layerwise-problem-continuous}), it follows from the definition of $\mathcal{Z}$ that Problems (\ref{eq:regularized-layerwise-problem-continuous}) and (\ref{eq:wass-min-movement3}) are equivalent, which concludes.
\end{proof}


Since we solve Problems \eqref{eq:regularized-layerwise-problem-continuous} over neural networks, their representation power shown by universal approximation theorems \cite{cybenko,hornik} is important to get close to equivalence between \eqref{eq:wass-min-movement1} and \eqref{eq:regularized-layerwise-problem-continuous}, as we need to approximate an optimal transport map. We also know that the training of each module, if it is shallow, converges \cite{shallow1,shallow2,shallow3,shallow4,shallow5}.

If $\mathcal{Z}$ is lower-semi continuous then Problems (\ref{eq:wass-min-movement1}) always admit a solution because $\mathcal{P}(\Omega)$ is compact. If $\mathcal{Z}$ is also $\lambda$-geodesically convex for $\lambda{>}0$, we have convergence of $\rho^\tau_k$ as $k {\rightarrow} \infty$ and $\tau {\rightarrow} 0$ to a minimizer of $\mathcal{Z}$, potentially under more technical conditions (see Appendix \ref{appsec:gflow}). Even though a machine learning loss will usually not satisfy these conditions, this analysis offers hints as to why our method avoids in practice the problem of stagnation or collapse in performance of module-wise training as $k$ increases, as we are making proximal local steps in Wasserstein space to minimize the loss. This convergence discussion also suggests taking $\tau$ as small as possible and many modules. 


\subsection{Regularity result}\label{subsec:regularity}

As a secondary result, we show that Problem (\ref{eq:regularized-layerwise-problem-continuous}) has a solution and that the solution module $T_k^{\tau}$ is an optimal transport map between its input and output distributions, which means that it comes with some regularity. \cite{lap} show that these networks generalize better and overfit less in practice. We assume that the minimization in $F$ is over a compact set $\mathcal{F}$, that $\rho_k^\tau$ is absolutely continuous, that $\mathcal{L}$ is continuous and non-negative, that $\Omega$ is convex and compact and that $\partial \Omega$ is negligible.

\begin{proposition}
\label{prop:existence} 
Problem (\ref{eq:regularized-layerwise-problem-continuous}) has a minimizer $(T_k^{\tau}, F_k^{\tau})$ such that $T_k^{\tau}$ is an optimal transport map. And for any minimizer $(T_k^{\tau}, F_k^{\tau})$, $T_k^{\tau}$ is an optimal transport map.
\end{proposition}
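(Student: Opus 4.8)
The plan is to use the direct method of the calculus of variations, but applied after a reduction to a minimization over probability measures, using Brenier's theorem to pass back and forth between transport maps and their push-forwards. Throughout, write $\rho := \rho_k^\tau$ (absolutely continuous by hypothesis) and $J(T,F) := \mathcal{L}(F, T_\sharp\rho) + \frac{1}{2\tau}\int_\Omega \|T(x)-x\|^2\diff\rho(x)$, which is bounded below by $0$ since $\mathcal{L}\geq 0$.

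First I would reduce the problem. For any competitor $(T,F)$, setting $\mu := T_\sharp\rho \in \mathcal{P}(\Omega)$, the definition \eqref{eq:wass-distance} of $W_2$ gives $\int_\Omega\|T(x)-x\|^2\diff\rho \geq W_2^2(\rho,\mu)$, hence $J(T,F) \geq \mathcal{L}(F,\mu) + \frac{1}{2\tau}W_2^2(\rho,\mu)$. Conversely, since $\rho$ is absolutely continuous, Brenier's theorem supplies for every $\mu\in\mathcal{P}(\Omega)$ an optimal transport map $S_\mu = \nabla\phi_\mu$ ($\phi_\mu$ convex) with $(S_\mu)_\sharp\rho = \mu$ and $\int_\Omega\|S_\mu(x)-x\|^2\diff\rho = W_2^2(\rho,\mu)$; convexity of $\Omega$ ensures $S_\mu$ takes values in $\Omega$, so it is an admissible competitor. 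Therefore the infimum of $J$ over $(T,F)$ equals the infimum of the reduced functional $(F,\mu)\mapsto \mathcal{L}(F,\mu) + \frac{1}{2\tau}W_2^2(\rho,\mu)$ over $\mathcal{F}\times\mathcal{P}(\Omega)$, and a minimizer of $J$ will be recovered from any minimizer $(F^\ast,\mu^\ast)$ of the reduced functional by taking $T = S_{\mu^\ast}$.

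Next I would establish existence for the reduced problem by compactness. Since $\Omega$ is compact, $\mathcal{P}(\Omega)$ is compact for the narrow topology (Prokhorov), which on a compact base coincides with $W_2$-convergence, and $\mu\mapsto W_2^2(\rho,\mu)$ is continuous for it. Combined with compactness of $\mathcal{F}$ and the assumed continuity of $\mathcal{L}$ (jointly in $F$ and in $\mu$ for the narrow topology), the reduced functional is continuous on the compact set $\mathcal{F}\times\mathcal{P}(\Omega)$ and hence attains its minimum at some $(F_k^\tau,\mu^\ast)$. Setting $T_k^\tau := S_{\mu^\ast}$ then yields a minimizer $(T_k^\tau,F_k^\tau)$ of $J$ whose first component is an optimal transport map, which is the first assertion. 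For the second assertion, I would argue by contradiction: given any minimizer $(T,F)$ with $\mu := T_\sharp\rho$, if $\int_\Omega\|T(x)-x\|^2\diff\rho > W_2^2(\rho,\mu)$ then $(S_\mu,F)$ would give a strictly smaller value of $J$; so equality must hold, meaning $(\mathrm{id},T)_\sharp\rho$ is an optimal plan between $\rho$ and $\mu$, and by uniqueness of the optimal plan with absolutely continuous source it equals $(\mathrm{id},\nabla\phi_\mu)_\sharp\rho$, i.e. $T = \nabla\phi_\mu$ $\rho$-almost everywhere, so $T$ is an optimal transport map.

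I expect the main obstacle to be the step where one wants existence of an \emph{optimal transport map} rather than merely an optimal coupling: the set of transport maps is not closed under narrow (or $W_2$) limits, so the direct method cannot be run directly on the variable $T$, and a minimizing sequence of maps may degenerate to a non-deterministic plan. The resolution is precisely the reduction above — the transport term depends on $T$ only through $\mu = T_\sharp\rho$ and, for fixed $\mu$, is minimized exactly by the Brenier map with optimal value $W_2^2(\rho,\mu)$ — which simultaneously makes the existence argument work and forces every minimizer to be an optimal transport map. The remaining points are routine but worth checking: joint continuity of $\mathcal{L}$ in $(F,\mu)$ for the narrow topology, that $S_\mu$ is a legitimate map $\Omega\to\Omega$ (using convexity of $\Omega$ and $\mathrm{supp}\,\mu\subseteq\Omega$), and that only absolute continuity of $\rho_k^\tau$ — not of $\mu^\ast$ — is needed for Brenier's theorem and for uniqueness of the optimal plan.
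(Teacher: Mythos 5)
Your proof is correct and uses essentially the same approach as the paper's: compactness of $\mathcal{F}\times\mathcal{P}(\Omega)$ under narrow/weak convergence, Brenier's theorem to produce an optimal transport map from the absolutely continuous $\rho_k^\tau$ to any target $\mu$, the fact that $W_2$ metrizes weak convergence on the compact $\Omega$, and the observation that replacing any $T$ pushing $\rho$ to $\mu$ by the Brenier map can only decrease the transport term. The paper argues via a minimizing sequence $(\tilde F_i,\tilde T_i)$ and a weakly convergent subsequence of the pushforwards, while you first reduce cleanly to the functional $(F,\mu)\mapsto\mathcal{L}(F,\mu)+\tfrac{1}{2\tau}W_2^2(\rho,\mu)$ and then apply the direct method on the reduced problem; this is a reorganization rather than a different route, though your version makes explicit several points the paper leaves implicit (that the Brenier map stays in $\Omega$ by convexity, that minimizing sequences of maps may not converge to maps, and that the second assertion uses uniqueness of the optimal plan from an absolutely continuous source).
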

The proof is in Appendix \ref{appsec:proof}. OT maps have regularity properties under some boundedness assumptions. Given Theorem \ref{th:regularity} in Appendix \ref{appsec:ot} taken from \cite{figalli}, $T_k^{\tau}$ is $\eta$-Hölder continuous almost everywhere and if the optimization algorithm we use to solve the discretized problem (\ref{eq:regularized-layerwise-problem}) returns an approximate solution pair $(\Tilde{F}_k^{\tau}, \Tilde{T}_k^{\tau})$ such that $\Tilde{T}_k^{\tau}$ is an $\epsilon$-optimal transport map, i.e. $\| \Tilde{T}_k^{\tau} - T_k^{\tau} \|_\infty \leq \epsilon$, then we have (using the triangle inequality) the following stability property of the module $\Tilde{T}_k^{\tau}$:
\begin{equation}
\label{eq:stability}
\| \Tilde{T}_k^{\tau}(x) - \Tilde{T}_k^{\tau}(y) \| \leq 2 \epsilon + C \| x - y \|^\eta
\end{equation}
for almost every  $x,y \in \text{supp}(\rho_k^\tau)$ and $C{>}0$. Composing these stability bounds on $T_k^{\tau}$ and $\Tilde{T}_k^{\tau}$ allows to get bounds for the composition networks $G^\tau_k$ and $\Tilde{G}^\tau_k {=} \Tilde{T}^\tau_k \circ .. \circ \Tilde{T}^\tau_1$.

To summarize Section \ref{sec:method}, the transport regularization makes each module more regular and it allows the modules to build on each other as $k$ increases to solve the task, which is the property we desire.

\section{Practical implementation} \label{sec:implem}

\subsection{Multi-block modules}

For simplicity, we presented in (\ref{eq:regularized-layerwise-problem}) the case where each module is a single residual block. However, in practice, we often split the network into modules that are made-up of many residual blocks each. We show here that regularizing the kinetic energy of such modules still amounts to a transport regularization, which means that the theoretical results in Propositions  \ref{prop:link} and \ref{prop:existence} still apply.

If each module $T_k$ is made up of $M$ residual blocks, i.e. applies $x_{m+1} {=} x_m {+} r_m(x_m)$ for $0 {\leq} m {<} M$, then its total discrete kinetic energy for a single data point $x_0$ is the sum of its squared residue norms $\sum \|r_m(x_m)\|^2$, since a residual network can be seen as a discrete Euler scheme for an ordinary differential equation \cite{weinan} with velocity field $r$:
\begin{equation}
    \label{eq:resnet-euler}
    x_{m+1} = x_m + r_m(x_m) \; \longleftrightarrow \; \partial_t x_t = r_t(x_t)
\end{equation}
and $\sum \|r_m(x_m)\|^2$ is then the discretization of the total kinetic energy $\int_0^1 \| r_t(x) \|^2 \diff t$ of the ODE. If $\psi_m^x$ denotes the position of a point $x$ after $m$ residual blocks, then regularizing the kinetic energy of multi-block modules now means solving
\begin{align}
    \label{eq:regularized-modulewise-problem}
    (T_k^\tau, F_k^\tau) & \in \arg\min_{T,F} \sum_{x \in \mathcal{D}} ( L(F, T(G_{k-1}^\tau(x)) + \frac{1}{2 \tau} \sum_{m=0}^{M-1} \| r_m(\psi_m^x) \|^2 ) \\ \nonumber
    & \ \text{s.t. } T = (\texttt{id} + r_{M-1}) \circ ... \circ (\texttt{id} + r_0), \ \psi_0^x = G_{k-1}^\tau(x), \psi_{m+1}^x = \psi_m^x + r_m(\psi_m^x) \nonumber
\end{align}
where $G^\tau_k {=} T^\tau_k {\circ} .. {\circ} T^\tau_1$ for $1 {\leq} k {\leq} K$ and $G^\tau_0 {=} \texttt{id}$. We also minimize this sum of squared residue norms instead of $\|T(x) - x\|^2$ (the two no longer coincide) as it works better in practice, which we assume is because it offers a more localized control of the transport. As expressed in \eqref{eq:resnet-euler}, a residual network can be seen as an Euler scheme of an ODE and Problem (\ref{eq:regularized-modulewise-problem}) is then the discretization of
\begin{align} 
    \label{eq:nn-wass-min-movement-joint-dyn}
    (T_k^{\tau}, F_k^{\tau}) & \in \arg\min_{T,F}  \ \mathcal{L}(F, T_\sharp \rho_k^{\tau}) + \frac{1}{2 \tau} \int_0^1 \|v_t\|_{L^2((\phi^\cdot_t)_\sharp \rho_k^{\tau})}^2 \diff t \\ \nonumber
    & \ \text{s.t. } T = \phi_1^\cdot, \ \partial_t\phi_t^x = v_t(\phi_t^x), \ \phi^\cdot_0 = \text{id} \nonumber
\end{align}
where $\rho_{k+1}^{\tau} = (T_k^{\tau})_\sharp \rho_k^{\tau}$ and $r_m$ is the discretization of vector field $v_t$ at time $t=m/M$. Here, distributions $\rho_k^{\tau}$ are pushed forward through the maps $T_k^{\tau}$ which correspond to the flow $\phi$ at time ${t=1}$ of the kinetically-regularized velocity field $v_t$. We recognize in the second term in the target of (\ref{eq:nn-wass-min-movement-joint-dyn}) the optimal transport problem in its dynamic formulation (\ref{eq:brenier2}) from \cite{brenier}, and given the equivalence between the Monge OT problem (\ref{eq:wass-distance}) and the dynamic OT problem (\ref{eq:brenier2}) in Appendix \ref{appsec:ot}, Problem (\ref{eq:nn-wass-min-movement-joint-dyn}) is in fact equivalent to the original continuous formulation (\ref{eq:regularized-layerwise-problem-continuous}), and the theoretical results in Section \ref{sec:method} follow immediately (see also the proof of Proposition \ref{prop:existence} in Appendix \ref{appsec:proof}).

\subsection{Solving the module-wise problems} \label{subsec:implem-solv}

The module-wise problems can be solved in two ways. One can completely train each module with its auxiliary classifier for $N$ epochs before training the next module, which receives as input the output of the previous trained module. We call this \textit{sequential} module-wise training. But we can also do this batch-wise, i.e. do a complete forward pass on each batch but without a full backward pass, rather a backward pass that only updates the current module $T_k^\tau$ and its auxiliary classifier $F_k^\tau$, meaning that $T_k^\tau$ forwards its output to $T_{k+1}^\tau$ immediately after it computes it. We call this \textit{parallel} module-wise training. It is called \textit{decoupled} greedy training in \cite{belilovsky2}, which shows that combining it with batch buffers solves all three locking problems and allows a linear training parallelization in the depth of the network. We propose a variant of sequential module-wise training that we call \textit{multi-lap sequential} module-wise training, in which instead of training each module for $N$ epochs, we train each module from the first to the last sequentially for $N/R$ epochs, then go back and train from the first module to the last for $N/R$ epochs again, and we do this for $R$ laps. For the same total number of epochs and training time, and the same advantages (loading and training one module at a time) this provides a non-negligible improvement in accuracy over normal sequential module-wise training in most cases, as shown in Section \ref{sec:exp}. Despite our theoretical framework being that of sequential module-wise training, our method improves the test accuracy of all three module-wise training regimes. 

\subsection{Varying the regularization weight}

The discussion in Section \ref{subsec:gflow} suggests taking a fixed weight $\tau$ for the transport cost that is as small as possible. However, instead of using a fixed $\tau$, we might want to vary it along the depth $k$ to further constrain with a smaller $\tau_k$ the earlier modules to avoid that they overfit or the later modules to maintain the accuracy of earlier modules. We might also want to regularize the network further in earlier epochs when the data is more entangled. We propose in Appendix \ref{appsec:weight} to formalize this varying weight $\tau_{k,i}$ across modules $k$ and SGD iterations $i$ by using a scheme inspired by the method of multipliers to solve Problems (\ref{eq:regularized-layerwise-problem}) and (\ref{eq:regularized-modulewise-problem}). However, it works best in only one experiment in practice. The observed dynamics of $\tau_{k,i}$ suggest simply finding a fixed value of $\tau$ that is multiplied by 2 for the second half of the network, which works best in all the other experiments (see Appendix \ref{appsec:imp}). 

\section{Experiments} \label{sec:exp}


We call our method TRGL for Transport-Regularized Greedy Learning. For the auxiliary classifiers, we use the architecture from DGL \cite{belilovsky1,belilovsky2}, that is a convolution followed by an average pooling and a fully connected layer, which is very similar to that used by InfoPro \cite{infopro}, except for the Swin Transformer where we use a linear layer. We call vanilla greedy module-wise training with the same architecture but without our regularization VanGL, and we include its results in all tables for ablation study purposes. The code is available at \texttt{github.com/block-wise/module-wise} and implementation details are in Appendix \ref{appsec:imp}. 

\subsection{Parallel module-wise training} \label{subsec:exp-par}

To compare with other methods, we focus first on parallel training, as it performs better than sequential training and has been more explored recently. The first experiment is training in parallel 3 residual architectures and a VGG-19 \cite{vgg} divided into 4 modules of equal depth on TinyImageNet. We compare in Table \ref{tab:par-tinyimagenet-4-modules} our results in this setup to three of the best recent parallel module-wise training methods: DGL \cite{belilovsky2}, PredSim \cite{nokland} and Sedona \cite{sedona}, from Table 2 in \cite{sedona}. We find that our TRGL has a much better test accuracy than the three other methods, especially on the smaller architectures. It also performs better than end-to-end training on the three ResNets. Parallel TRGL in this case with 4 modules consumes $10$ to $21\%$ less memory than end-to-end training (with a batch size of 256).

\begin{table}[H]
\footnotesize
\caption{Test accuracy of parallel TRGL with 4 modules (average and 95$\%$ confidence interval over 5 runs) on TinyImageNet, compared to DGL, PredSim, Sedona and E2E from Table 2 in \cite{sedona}, with memory saved compared to E2E as a percentage of E2E memory consumption in red.}
\label{tab:par-tinyimagenet-4-modules}
\centering
\begin{tabular}{ccccccc}
\toprule
Architecture & Parallel VanGL & Parallel TRGL (ours) & PredSim & DGL & Sedona & E2E\\ 
\midrule
VGG-19 & 56.17 $\pm$ 0.29 {\color{red}($\downarrow 27\%$)} & \textbf{57.28} $\pm$ 0.20 {\color{red}($\downarrow 21\%$)} & 44.70 & 51.40 & 56.56 & 58.74 \\
ResNet-50 & 58.43 $\pm$ 0.45 {\color{red}($\downarrow 26\%$)} & \textbf{60.30} $\pm$ 0.58 {\color{red}($\downarrow 20\%$)} & 47.48 & 53.96 & 54.40 & 58.10 \\ 
ResNet-101 &  63.64 $\pm$ 0.30 {\color{red}($\downarrow 24\%$)} & \textbf{63.71} $\pm$ 0.40 {\color{red}($\downarrow 11\%$)} & 53.92 & 53.80 & 59.12 & 62.01  \\ 
ResNet-152 & 63.87 $\pm$ 0.16 {\color{red}($\downarrow 21\%$)}  & \textbf{64.23} $\pm$ 0.14 {\color{red}($\downarrow 10\%$)} & 51.76 & 57.64 & 64.10 & 62.32  \\
\bottomrule
\end{tabular}
\end{table}

The second experiment is training in parallel two ResNets divided into 2 modules on CIFAR100 \cite{cifar}. We compare in Table \ref{tab:par-cifar100-2-modules} our results in this setup to the two delayed gradient methods DDG \cite{ddg} and FR \cite{features-replay} from Table 2 in \cite{features-replay}. Here again, parallel TRGL has a better accuracy than the other two methods and than end-to-end training. With only two modules, the memory gains from less backpropagation are neutralized by the weight of the extra classifier and there are negligible memory savings compared to end-to-end training. However, parallel TRGL has a better test accuracy by up to almost 2 percentage points.

\begin{table}[H]
\footnotesize
\caption{Test accuracy of parallel TRGL with 2 modules (average and 95$\%$ confidence interval over 3 runs) on CIFAR100, compared to DDG, FR and E2E from Table 2 in \cite{features-replay}.}
\label{tab:par-cifar100-2-modules}
\centering
\begin{tabular}{cccccc}
\toprule
Architecture & Parallel VanGL & Parallel TRGL (ours) & DDG & FR & E2E \\
\midrule 
ResNet-101 & 77.31 $\pm$ 0.27 & \textbf{77.87} $\pm$ 0.44 & 75.75 & 76.90 & 76.52 \\  
ResNet-152 & 75.40 $\pm$ 0.75 & \textbf{76.55} $\pm$ 1.90 & 73.61 & 76.39 & 74.80  \\   
\bottomrule
\end{tabular}
\end{table}

The third experiment is training in parallel a ResNet-110 divided into two, four, eight and sixteen modules on STL10 \cite{stl10}. We compare in Table \ref{tab:par-stl} our results in this setup to the recent methods InfoPro \cite{infopro} and DGL \cite{belilovsky2} from Table 2 in \cite{infopro}. TRGL largely outperforms the other methods. It also outperforms end-to-end training in all but one case (that with 16 modules). With a batch size of 64, memory savings of parallel TRGL compared to end-to-end training reach $48\%$ and $58.5\%$ with 8 and 16 modules respectively, with comparable test accuracy. With 4 modules, TRGL training weighs $24\%$ less than end-to-end-training, and has a test accuracy that is better by $2$ percentage points (see Section \ref{subsec:exp-mem} and Table \ref{tab:memory-stl} for a detailed memory usage comparison with InfoPro).

\begin{table}[H]
\footnotesize
\caption{Test accuracy of Parallel (Par) TRGL with $K$ modules (average and 95$\%$ confidence interval over 5 runs) using a ResNet-110 on STL10, compared to DGL, two variants of InfoPro and E2E from Table 2 in \cite{infopro}.}
\label{tab:par-stl}
\centering
\begin{tabular}{cccccccc}
\toprule
$K$ & Par VanGL & Par TRGL (ours) & DGL & InfoPro S & InfoPro C & E2E \\ 
\midrule
2 & 79.85 $\pm$ 0.93 & \textbf{80.04} $\pm$ 0.85 & 75.03 $\pm$ 1.18 & 78.98 $\pm$ 0.51 & 79.01 $\pm$ 0.64 & 77.73 $\pm$ 1.61   \\ 
4 & 77.11 $\pm$ 2.31 & \textbf{79.72} $\pm$ 0.81 & 73.23 $\pm$ 0.64 & 78.72 $\pm$ 0.27 & 77.27 $\pm$ 0.40 & 77.73 $\pm$ 1.61   \\ 
8 & 75.71 $\pm$ 0.55 & \textbf{77.82} $\pm$ 0.73 & 72.67 $\pm$ 0.24 & 76.40 $\pm$ 0.49 & 74.85 $\pm$ 0.52 & 77.73 $\pm$ 1.61 \\
16 & 73.57 $\pm$ 0.95 & \textbf{77.22} $\pm$ 1.20 & 72.27 $\pm$ 0.58 & 73.95 $\pm$ 0.71 & 73.73 $\pm$ 0.48 & 77.73 $\pm$ 1.61   \\
\bottomrule
\end{tabular}
\end{table}

The fourth experiment is training (from scratch) in parallel a Swin-Tiny Transformer \cite{swin} divided into 4 modules on three datasets. We compare in Table \ref{tab:par-swin} our results with those of InfoPro \cite{infopro} and InfoProL, a variant of InfoPro proposed in \cite{local-transformers}. TRGL outperforms the other module-wise training methods. It does not outperform end-to-end training in this case, but consumes $29\%$ less memory on CIFAR10 and CIFAR100 and 50$\%$ less on STL10, compared to $38\%$ for InfoPro and $45\%$ for InfoProL in \cite{local-transformers}.

\begin{table}[H]
\footnotesize
\caption{Test accuracy of parallel TRGL with 4 modules (average and 95$\%$ confidence interval over 5 runs) on a Swin-Tiny Transformer, compared to InfoPro, InfoProL and E2E from Table 3 in \cite{local-transformers}, with memory saved compared to E2E as a percentage of E2E memory consumption in red.}
\label{tab:par-swin}
\centering
\begin{tabular}{cccccc}
\toprule
Dataset & Parallel VanGL & Parallel TRGL (ours) & InfoPro & InfoProL & E2E \\ 
\midrule
 STL10   & 67.00 $\pm$ 1.36 {\color{red}($\downarrow 55\%$)}  & \textbf{67.92} $\pm$ 1.12 {\color{red}($\downarrow 50\%$)}  & 64.61 {\color{red}($\downarrow 38\%$)} & 66.89 {\color{red}($\downarrow 45\%$)} & 72.19  \\ 
 CIFAR10 & 83.94 $\pm$ 0.42 {\color{red}($\downarrow 33\%$)} & \textbf{86.48} $\pm$ 0.54 {\color{red}($\downarrow 29\%$)} & 83.38 {\color{red}($\downarrow 38\%$)} & 86.28 {\color{red}($\downarrow 45\%$)} & 91.37  \\ 
CIFAR100  & 69.34 $\pm$ 0.91 {\color{red}($\downarrow 33\%$)} & \textbf{74.11} $\pm$ 0.31 {\color{red}($\downarrow 29\%$)} & 68.36 {\color{red}($\downarrow 38\%$)} & 73.00 {\color{red}($\downarrow 45\%$)} & 75.03  \\ 
\bottomrule
\end{tabular}
\end{table}

\subsection{Memory savings} \label{subsec:exp-mem} 

As seen above, parallel TRGL is lighter than end-to-end training by up to almost $60\%$. The extra memory consumed by our regularization compared to parallel VanGL is between 2 and $13\%$ of end-to-end memory. Memory savings depend then mainly on the size of the auxiliary classifier, which can easily be adjusted. Note that delayed gradients method DDG and FR increase memory usage \cite{features-replay}, and Sedona does not claim to save memory, but rather to speed up training \cite{sedona}. DGL is architecture-wise essentially identical to VanGL and consumes the same memory. 

We compare in Table \ref{tab:memory-stl} the memory consumption of our method to that of InfoPro \cite{infopro} on a ResNet-110 on STL10 with a batch size of 64 (so the same setting as in Table \ref{tab:par-stl}). InfoPro \cite{infopro} also propose to split the network into modules that have the same weight but not necessarily the same number of layers. They only implement this for $K {\leq} 4$ modules. When the modules are even in weight and not in depth, we call the training methods VanGL*, TRGL* and InfoPro*. In practice, this leads to shallower early modules which slightly hurts performance according to \cite{sedona}, and as seen below. However, TRGL* still outperforms InfoPro and end-to-end training, and it leads to even bigger memory savings than InfoPro*. We see in Table \ref{tab:memory-stl} below that TRGL saves more memory than InfoPro in two out of three cases (4 and 8 modules), and about the same in the third case (16 modules), with much better test accuracy in all cases. Likewise, TRGL* is lighter than InfoPro*, with better accuracy. See Appendix \ref{appsec:memory} for more details.

\begin{table}[H]
\footnotesize
\caption{Memory savings using a ResNet-110 on STL10 split into $K$ modules trained in parallel with a batch size of 64, as a percentage of the weight of end-to-end training. Average test accuracy over 5 runs is between brackets. Test accuracy of end-to-end training is 77.73$\%$.}
\label{tab:memory-stl}
\vskip 0.1in
\centering
\begin{tabular}{ccccccc}
\toprule
\multicolumn{1}{c}{} & \multicolumn{3}{c}{Equally deep modules} & \multicolumn{3}{c}{Equally heavy modules} \\
\cmidrule(r){2-4} \cmidrule(r){5-7}
$K$ & Par VanGL & Par TRGL (ours) & InfoPro & Par VanGL* & Par TRGL* (ours) & InfoPro* \\ 
\midrule
 4 & 27$\%$ (77.11) & 24$\%$ (\textbf{79.72}) & 18$\%$ (78.72) & 41$\%$ (77.14) & 39$\%$ (\textbf{78.94}) & 33$\%$ (78.78)  \\ 
 8 & 50$\%$ (75.71) & 48$\%$ (\textbf{77.82}) & 37$\%$ (76.40) &   \\
16 & 61$\%$ (73.57) & 58$\%$ (\textbf{77.22}) & 59$\%$ (73.95) & \\
\bottomrule
\end{tabular}
\end{table}

\subsection{Training time} \label{subsec:exp-time} 

Since we do not implement forward unlocking with batch buffers as in DGL (i.e. only the backward passes of the modules happen in parallel), parallel module-wise training does slightly slow down training in this case. Epoch time increases by $6\%$ with 2 modules and by $16\%$ with 16 modules. TRGL is only slower than VanGL by $2\%$ for all number of modules due to the additional regularization term. This is comparable to InfoPro which reports a time overhead between 1 and $27\%$ compared to end-to-end training.

\subsection{Sequential full block-wise training} \label{subsec:exp-seq}

Block-wise sequential training, meaning that each module is a single residual block and that the blocks are trained sequentially, therefore requiring only enough memory to train one block and its  classifier. Even though it has been less explored in recent module-wise training methods, it has been used in practice in very constrained settings such as on-device training \cite{sensors1,sensors2}. We therefore test our regularization in this section in this setting, with more details in Appendix \ref{appsec:exp}. 

We propose here to use shallower ResNets that are initially wider. These architectures are well-adapted to layer-wise training as seen in \cite{belilovsky1}. We check first in Table \ref{tab:par-cifar10-2-modules} in Appendix \ref{appsec:exp} that this architecture works well with parallel module-wise training with 2 modules by comparing it favorably on CIFAR10 \cite{cifar} with methods DGL \cite{belilovsky2}, InfoPro \cite{infopro} and DDG \cite{ddg} that use a ResNet-110 with the same number of parameters. 

We then train a 10-block ResNet block-wise on CIFAR100. In Tables \ref{tab:cifar100-resnet-10-1-seq} and \ref{tab:cifar100-resnet-10-1-par} in Appendix \ref{appsec:exp}, we see that MLS training improves the accuracy of sequential training by $0.8$ percentage points when the trainset is full, but works less well on small train sets. Of the two, the regularization mainly improves the test accuracy of MLS training. The improvement increases as the training set gets smaller and reaches 1 percentage point. While parallel module-wise training performs quite close to end-to-end training in the full data regime and much better in the small data regime, sequential and multi-lap sequential training are competitive with end-to-end training in the small data regime. Combining the multi-lap trick and the regularization improves the accuracy of sequential training by 1.2 percentage points when using the entire trainset. We report further results for full block-wise training on MNIST \cite{mnist} and CIFAR10 \cite{cifar} in Tables \ref{tab:seq-cifar10-resnet-10-1-last} and \ref{tab:par-mnist-resnet-20-1} in Appendix \ref{appsec:exp}. 

The $88\%$ accuracy of sequential training on CIFAR10 in Table \ref{tab:seq-cifar10-resnet-10-1-last} is the same as in Table 2 of \cite{belilovsky1}, which is the best method for layer-wise sequential training available, with VGG networks of comparable depth and width.

\subsection{Accuracy after each module}\label{subsec:exp-fig}

Finally, we verify that our method avoids the stagnation or collapse in accuracy with depth. In Figure \ref{fig:par-seq} below, we show the accuracy of each module with and without the regularization. 

On the left, from parallel module-wise training experiments from Table \ref{tab:par-stl}, TRGL performs worse than vanilla greedy learning early, but surpasses it in later modules, indicating that it does avoid early overfitting. On the right, from sequential block-wise training experiments from Table \ref{tab:seq-cifar10-resnet-10-1-last}, we see a large decline in performance that the regularization avoids. We see similar patterns in Figure \ref{fig:mls-par} in Appendix \ref{appsec:exp} with parallel and MLS block-wise training.

\begin{figure}[H]
\begin{center}
\includegraphics[width=\textwidth]{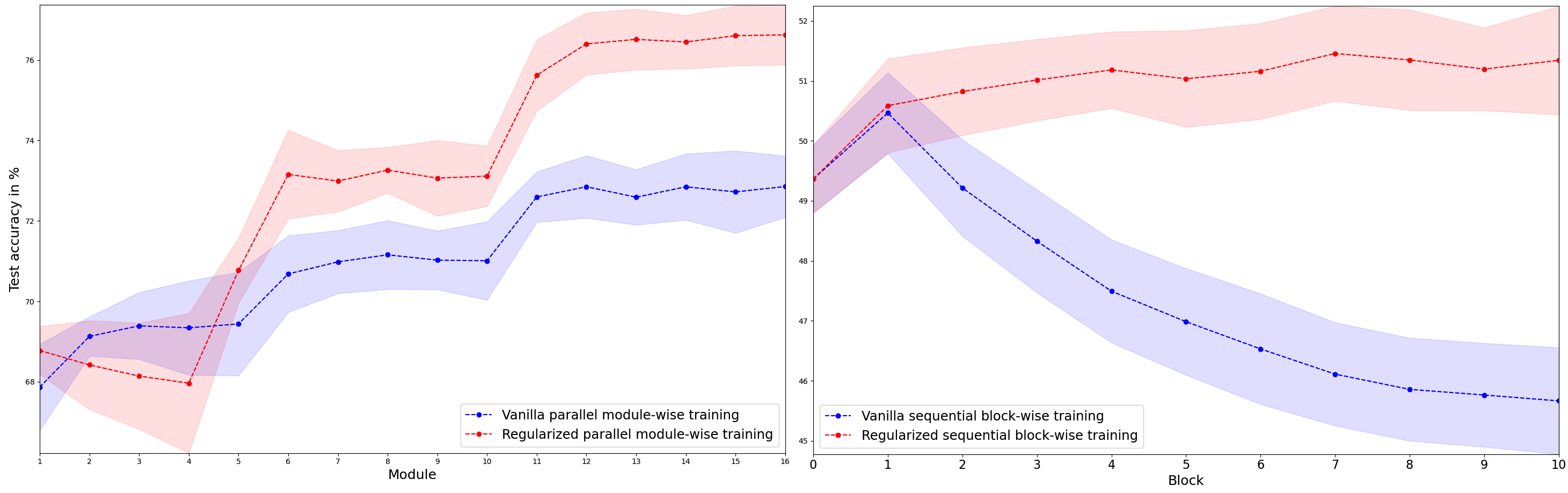}
\end{center}
\caption{Test accuracy after each module averaged over 10 runs with $95\%$ confidence intervals. Left: parallel vanilla (VanGL, in blue) and regularized (TRGL, in red) module-wise training of a ResNet-110 with 16 modules on STL10 (Table \ref{tab:par-stl}). Right: sequential vanilla (VanGL, in blue) and regularized (TRGL, in red) block-wise training of a 10-block ResNet on $2\%$ of CIFAR10 (Table \ref{tab:seq-cifar10-resnet-10-1-last}).}
\label{fig:par-seq}
\end{figure}

\section{Limitations}\label{sec:exp-lim} The results in Appendix \ref{appsec:exp} show a few limitations of our method, as the improvements from the regularization are sometimes minimal on sequential training. However, the results show that our approach works in all settings (parallel and sequential with many or few modules), whereas other papers don't test their methods in all settings, and some show problems in other settings than the original one in subsequent papers (e.g. delayed gradients methods when the number of modules increases \cite{features-replay} and PredSim in \cite{sedona}). Also, for parallel training in Section \ref{subsec:exp-par}, the improvement from the regularization compared to VanGL is larger and increases with the number of modules (so with the memory savings) and reaches almost 5 percentage points. We show in Appendix \ref{appsec:sensitivity} that our method is not very sensitive to the choice of hyperparameter $\tau$ over a large scale. 

\section{Related work}\label{sec:related} 

Layer-wise training was initially considered as a pre-training and initialization method \cite{bengio,cascade} and was recently shown to be competitive with end-to-end training \cite{belilovsky1,nokland}. Many papers consider using a different auxiliary loss, instead of or in addition to the classification loss: kernel similarity \cite{kernel-similarity}, information-theory-inspired losses \cite{gim,information-bottleneck,hsic,infopro} and biologically plausible losses \cite{gim,nokland,bio,clapp}. Methods \cite{belilovsky1}, PredSim \cite{nokland}, DGL \cite{belilovsky2}, Sedona \cite{sedona} and InfoPro \cite{infopro} report the best module-wise training results. \cite{belilovsky1,belilovsky2} do it simply through the architecture choice of the auxiliary networks. Sedona applies architecture search to decide on where to split the network into modules and what auxiliary classifier to use before module-wise training. Only BoostResNet \cite{boosting} also proposes a block-wise training idea geared for ResNets. However, their results only show better early performance and end-to-end fine-tuning is required to be competitive. A method called ResIST \cite{resist} that is similar to block-wise training of ResNets randomly assigns ResBlocks to one of up to 16 modules that are trained independently and reassembled before another random partition. More of a distributed training method, it is only compared to local SGD \cite{local-sgd}. These methods can all be combined with our regularization, and we do use the auxiliary classifier from \cite{belilovsky1,belilovsky2}. 

Besides module-wise training, methods such as DNI \cite{dni1,dni2}, DDG \cite{ddg} and FR \cite{features-replay}, solve the update and backward locking problems with an eye towards parallelization by using delayed or predicted gradients, or even predicted inputs to address forward locking, which is what \cite{layer-parallel} do. But they observe training issues with more than 5 modules \cite{features-replay}. This makes them compare unfavorably to module-wise training \cite{belilovsky2}. The high dimension of the predicted gradient, which scales with the size of the network, makes \cite{dni1,dni2} challenging in practice. Therefore, despite its simplicity, greedy module-wise training is more appealing when working in a constrained setting. 

Viewing ResNets as dynamic transport systems \cite{oudt,lap} followed from their view as a discretization of ODEs \cite{weinan}. Transport regularization of ResNets in particular is motivated by the observation that they are naturally biased towards minimally modifying  their input \cite{iterative,hauser}. We further linked this transport viewpoint with gradient flows in the Wasserstein space to apply it in a principled way to module-wise training. Gradient flows on the data distribution appeared recently in deep learning. In \cite{alvarez2}, the focus is on functionals of measures whose first variations are known in closed form and used, through their gradients, in the algorithm. This limits the scope of their applications to transfer learning and similar tasks. Likewise, \cite{vgrow,slicedflow,mmdflow,dgflow} use the explicit gradient flow of $f$-divergences and other distances between measures for generation and generator refinement. In contrast, we use the discrete minimizing movement scheme which does not require computation of the first variation and allows to consider classification. 

\section{Conclusion}

We introduced a transport regularization for module-wise training that theoretically links module-wise training to gradient flows of the loss in probability space. Our method provably leads to more regular modules and experimentally improves the test accuracy of module-wise parallel, sequential and multi-lap sequential (a variant of sequential training that we introduce) training. Through this simple method that does not complexify the architecture, we make module-wise training competitive with end-to-end training while benefiting from its lower memory usage. Being a regularization, the method can easily be combined with other layer-wise training methods. Future work can experiment with working in Wasserstein space $W_p$ for $p {\neq} 2$, i.e. regularizing with a norm $\|.\|_p$ with $p {\neq} 2$. 
One can also ask how far the obtained composition network $G_K$ is from being an OT map itself, which could provide a better stability bound than the one obtained by naively chaining the stability bounds \eqref{eq:stability}.

\bibliographystyle{acm}
\bibliography{biblio}

\newpage 

\appendix

\section{Background on optimal transport}\label{appsec:ot}

The Wasserstein space $\mathbb{W}_2(\Omega)$ with $\Omega$ a convex and compact subset of $\mathbb{R}^d$ is the space $\mathcal{P}(\Omega)$ of probability measures over $\Omega$, equipped with the distance $W_2$ given by the solution to the optimal transport problem
\begin{equation}
    \label{eq:kantorovich}
    W_2^2(\alpha, \beta) = \min_{\gamma \in \Pi(\alpha, \beta)} \int_{\Omega \times \Omega} \|x - y \|^2 \diff \gamma(x,y)
\end{equation}
where $\Pi(\alpha, \beta)$ is the set of probability distribution over $\Omega \times \Omega$ with first marginal $\alpha$ and second marginal $\beta$, i.e. $\Pi(\alpha, \beta) = \{ \gamma \in \mathcal{P}(\Omega \times \Omega) \ | \ {\pi_1}_\sharp \gamma = \alpha, \ {\pi_2}_\sharp \gamma = \beta \}$ where $\pi_1(x,y) = x$ and $\pi_2(x,y) = y$. The optimal transport problem can be seen as looking for a transportation plan minimizing the cost of displacing some distribution of mass from one configuration to another. This problem indeed has a solution in our setting and $W_2$ can be shown to be a geodesic distance (see for example \cite{santambrogio2015,villani}). If $\alpha$ is absolutely continuous and $\partial \Omega$ is $\alpha$-negligible then the problem in (\ref{eq:kantorovich}) (called the Kantorovich problem) has a unique solution and is equivalent to the Monge problem, i.e.
\begin{equation}
    \label{eq:monge}
    W_2^2(\alpha, \beta) = \min_{T \text{ s.t. } T_\sharp \alpha = \beta} \int_{\Omega} \|T(x) - x \|^2 \diff \alpha (x)
\end{equation}
and this problem has a unique solution $T^\star$ linked to the solution $\gamma^\star$ of (\ref{eq:kantorovich}) through $\gamma^\star = (\text{id}, T^\star)_\sharp \alpha$. Another equivalent formulation of the optimal transport problem in this setting is the dynamical formulation \cite{brenier}. Here, instead of directly pushing samples of $\alpha$ to $\beta$ using $T$, we can equivalently displace mass, according to a continuous flow with velocity $v_t: \mathbb{R}^d \to \mathbb{R}^d$. This implies that the density $\alpha_t$ at time $t$ satisfies the \textit{continuity equation} ${\partial_t \alpha_t + \nabla \cdot (\alpha_t v_t) =0}$, assuming that initial and final conditions are given by $\alpha_0=\alpha$ and $\alpha_1 = \beta$ respectively. In this case, the optimal displacement is the one that minimizes the total action caused by $v$ :
\begin{align}
    \label{eq:brenier1}
    W_2^2(\alpha, \beta) = & \min_v \int_0^1 \|v_t\|^2_{L^2(\alpha_t)}\diff t \\
    & \text{s.t. } \partial_t \alpha_t + \nabla \cdot (\alpha_t v_t) = 0, \ \alpha_0 = \alpha, \alpha_1= \beta \nonumber
\end{align}

Instead of describing the density's evolution through the continuity equation, we can describe the paths $\phi^x_t$ taken by particles at position $x$ from $\alpha$ when displaced along the flow $v$. Here $\phi^x_t$ is the position at time $t$ of the particle that was at $x \sim \alpha$ at time 0. The continuity equation is then equivalent to $\partial_t\phi_t^x = v_t(\phi_t^x)$. See chapters 4 and 5 of \cite{santambrogio2015} for details. Rewriting the conditions as necessary, Problem (\ref{eq:brenier1}) becomes
\begin{align}
    \label{eq:brenier2}
    W_2^2(\alpha, \beta) = & \min_v \int_0^1 \|v_t\|_{L^2((\phi^\cdot_t)_\sharp \alpha)}^2 \diff t \\
    & \text{s.t. } \partial_t\phi_t^x = v_t(\phi_t^x), \ \phi^\cdot_0 = \text{id}, (\phi^\cdot_1)_\sharp \alpha = \beta \nonumber
\end{align}
and the optimal transport map $T^\star$ that solves (\ref{eq:monge}) is in fact $T^\star(x) = \phi_1^x$ for $\phi$ that solves the continuity equation together with the optimal $v^\star$ from (\ref{eq:brenier2}). We refer to \cite{santambrogio2015,villani} for these results on optimal transport.

Optimal transport maps have some regularity properties under some boundedness assumptions. We mention the following result from \cite{figalli}:

\begin{theorem}
\label{th:regularity}
Let $\alpha$ and $\beta$ be absolutely continuous measures on $\mathbb{R}^d$ and $T$ the optimal transport map between $\alpha$ and $\beta$ for the Euclidean cost. Suppose there are bounded open sets $X$ and $Y$, such that the density of $\alpha$ (respectively of $\beta$) is null on $X^\mathsf{c}$ (respectively $Y^\mathsf{c}$) and bounded away from zero and infinity on $X$ (respectively $Y$).

Then there exists two relatively closed sets of null measure $A \subset X$ and $B \subset Y$, such that $T$ is $\eta$-Hölder continuous from $X\setminus A$ to $Y\setminus B$, i.e. $\forall \ x,y \in X \setminus A$ we have $$\|T(x)-T(y)\|\leq C\|x-y\|^\eta \text{ for constants } \eta, C >0$$
\end{theorem}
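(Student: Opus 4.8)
The plan is to follow the partial-regularity strategy of De Philippis--Figalli for the Monge--Amp\`ere equation, of which this statement is a reformulation. First I would reduce to a statement about the Brenier potential: by Brenier's theorem the optimal transport map for the quadratic cost is a gradient of a convex function, $T = \nabla u$, and $u$ solves the Monge--Amp\`ere equation $\det D^2 u = f/(g \circ \nabla u)$ in the Alexandrov sense on $X$, where $f$ and $g$ denote the densities of $\alpha$ and $\beta$. Because $f$ is supported in $X$ and $g$ in $Y$ with both pinched between $\lambda^{-1}$ and $\lambda$ on their supports, the Monge--Amp\`ere measure $\mu_u$ satisfies $\lambda^{-2}|E| \le \mu_u(E) \le \lambda^2 |E|$ for Borel $E \subset X$; thus $u$ is an Alexandrov solution with merely ``pinched'' (bounded above and below, not continuous) right-hand side. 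It then suffices to show that any such $u$ is $C^{1,\eta}_{\mathrm{loc}}$ on an open set $\Omega_u \subset X$ of full measure, since then $A := X \setminus \Omega_u$ works, and applying the same argument to the Legendre transform $u^{*}$, whose gradient is $T^{-1}$, yields $B := Y \setminus \Omega_{u^{*}}$.

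Second, the geometric core. For $x_0 \in X$ one studies the sections $S_h(x_0) = \{x : u(x) < u(x_0) + \nabla u(x_0)\cdot(x-x_0) + h\}$. Via the Alexandrov maximum principle and the pinching of $\mu_u$, for small $h$ these sections are ``balanced'', and composing with the affine map $A_h$ that puts $S_h(x_0)$ in John (normalized) position, the rescaled potential $\tilde u_h(x) = (\det A_h)^{2/d}\,[\,u(A_h^{-1}x) - \text{affine}\,]/h$ again solves Monge--Amp\`ere with right-hand side pinched between $\lambda^{-2}$ and $\lambda^2$ on a normalized convex domain. The key analytic input is a compactness/stability step: if the right-hand side is within $\varepsilon$ of the constant $1$ in an averaged ($L^1$-over-the-section) sense, then $\tilde u_h$ is $C^2$-close on a smaller section to a solution of $\det D^2 v = 1$, for which Caffarelli's classical theory supplies interior $C^{1,\alpha}$ and $W^{2,p}$ estimates. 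Feeding this into a Savin-type small-perturbation iteration produces an improvement-of-flatness dichotomy: at every $x_0$ where the averaged oscillation of the density ratio over shrinking sections tends to $0$, the renormalized sections converge to ellipsoids at a geometric rate. This upgrades to the conclusion that $u$ is differentiable at $x_0$, $\nabla u$ is defined and $\eta$-H\"older in a neighbourhood, and $\mu_u$ is locally comparable to Lebesgue measure there.

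Third, the bad set is negligible. The set where the above fails is contained in the set of $x_0$ that are not such ``good points''. Here the lower bound $g \ge \lambda^{-1} > 0$ on $Y$ is essential: it guarantees that $\nabla u$ sends sections of controlled measure to sets of controlled measure, so that $f/(g\circ\nabla u)$ is a genuine $L^\infty \subset L^1_{\mathrm{loc}}$ function whose Lebesgue points have full measure; hence $|A| = 0$, and $A$ is relatively closed because its complement $\Omega_u$ is open by construction. H\"older continuity of $T = \nabla u$ on $X \setminus A$, with a uniform exponent $\eta$ and a constant $C$ that is locally uniform (and genuinely uniform on compact subsets, which is all that \eqref{eq:stability} needs), then follows by covering $X\setminus A$ with the good neighbourhoods. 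The symmetric argument for $u^{*}$ gives the same for $B$, using that $T$ is a bijection between $X\setminus A$ and $Y\setminus B$ up to null sets.

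The main obstacle is the second step: the small-perturbation theorem for Monge--Amp\`ere with only $L^\infty$ right-hand side. Caffarelli's theory requires the right-hand side to be Dini-continuous, whereas here it is merely pinched, so one cannot linearize naively; the subtlety is that the affine normalization must be performed at every scale and the compactness argument must produce closeness to a bona fide Alexandrov solution with constant right-hand side using only the pinching together with the Lebesgue-point hypothesis. This is exactly the technical heart of De Philippis--Figalli and of Figalli's monograph, relying on the full machinery of sections, Alexandrov-type estimates, and $W^{2,1+\varepsilon}$ regularity; for this reason we quote the result as stated and use only its conclusion.
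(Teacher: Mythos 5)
The paper does not prove this theorem: it is imported verbatim from \cite{figalli} (the De Philippis--Figalli partial regularity theory for optimal transport maps), and your sketch is a faithful outline of exactly that argument, ending, like the paper, by deferring to the source for the technical core. One caution if you ever expand the sketch: the two-sided pinching $\lambda^{-2}|E|\le\mu_u(E)\le\lambda^{2}|E|$ for \emph{all} Borel $E\subset X$ is not automatic for Brenier solutions when $Y$ is non-convex (the subdifferential can exit $Y$, breaking the upper bound), and this failure is precisely why the singular set $A$ must be excised rather than a consequence of a globally valid Alexandrov formulation.
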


\section{Background on gradient flows}\label{appsec:gflow}

We follow \cite{santambrogio2016,ambrosio} for this background on gradient flows. Given a function $\mathcal{L} : \mathbb{R}^d \rightarrow \mathbb{R}$ and an initial point $x_0 \in \mathbb{R}^d$, a \textit{gradient flow} is a curve $x: [0, \infty[ \rightarrow \mathbb{R}^d$ that solves the Cauchy problem
\begin{equation}
\label{eq:cauchy}
\systeme{x'(t) = - \nabla \mathcal{L}(x(t)), x(0) = x_0}
\end{equation}
A solution exists and is unique if $\nabla \mathcal{L}$ is Lipschitz or $\mathcal{L}$ is convex. Given $\tau > 0$ and $x^{\tau}_0 = x_0$ define a sequence $(x^{\tau}_k)_k$ through the \textit{minimizing movement scheme}:
\begin{equation}
\label{eq:euc-min-movement}
    x^{\tau}_{k+1} \in \arg \min_{x \in \mathbb{R}^d}  \ \mathcal{L}(x) + \frac{1}{2 \tau} \| x - x^{\tau}_k \|^2
\end{equation}
$\mathcal{L}$ lower semi-continous and $\mathcal{L}(x) \geq C_1 - C_2 \| x \|^2$ guarantees existence of a solution of (\ref{eq:euc-min-movement}) for $\tau$ small enough. $\mathcal{L}$ $\lambda$-convex meets these conditions and also provides uniqueness of the solution because of strict convexity of the target. See \cite{santambrogio2015,santambrogio2016,ambrosio}.

We interpret the point $x^{\tau}_k$ as the value of a curve $x$ at time $k \tau$. We can then construct a curve $x^{\tau}$ as the piecewise constant interpolation of the points $x^{\tau}_k$. We can also construct a curve $\Tilde{x}^{\tau}$ as the affine interpolation of the points $x^{\tau}_k$.

If $\mathcal{L}(x_0) < \infty$ and $\inf \mathcal{L} > - \infty$ then $(x^{\tau})$ and $(\Tilde{x}^{\tau})$ converge uniformly to the same curve $x$ as $\tau$ goes to zero (up to extracting a subsequence). If $\mathcal{L}$ is $\mathcal{C}^1$, then the limit curve $x$ is a solution of (\ref{eq:cauchy}) (i.e. a gradient flow of $\mathcal{L}$). If $\mathcal{L}$ is not differentiable then $x$ is solution of the problem defined using the subdifferential of $\mathcal{L}$, i.e. $x$ satisfies $x'(t) \in - \partial \mathcal{L}(x(t))$ for almost every $t$.

If $\mathcal{L}$ is $\lambda$-convex with $\lambda > 0$, then the solution to (\ref{eq:cauchy}) converges exponentially to the unique minimizer of $\mathcal{L}$ (which exists by coercivity). So taking $\tau \rightarrow 0$ and $k \rightarrow \infty$, we tend towards the minimizer of $\mathcal{L}$.

The advantage of the minimizing movement scheme (\ref{eq:euc-min-movement}) is that it can be adapted to metric spaces by replacing the Euclidean distance by the metric space's distance. In the (geodesic) metric space $\mathbb{W}_2(\Omega)$ with $\Omega$ convex and compact, for $\mathcal{L} : \mathbb{W}_2(\Omega) \rightarrow \mathbb{R} \cup \{ \infty \}$ lower semi-continuous for the weak convergence of measures in duality with $\mathcal{C}(\Omega)$ (equivalent to lower semi-continuous with respect to the distance $W_2$) and $\rho^{\tau}_0 = \rho_0 \in \mathcal{P}(\Omega)$, the minimizing movement scheme (\ref{eq:euc-min-movement}) becomes
\begin{equation}
\label{eq:wass-min-movement-app}
    \rho^{\tau}_{k+1} \in \arg \min_{\rho \in \mathcal{P}(\Omega)} \ \mathcal{L}(\rho) + \frac{1}{2 \tau} W_2^2(\rho, \rho^{\tau}_k)
\end{equation}
This problem has a solution because the objective is lower semi-continuous and the minimization is over $\mathcal{P}(\Omega)$ which is compact by Banach-Alaoglu. 

We can construct a piecewise constant interpolation between the measures $\rho^{\tau}_k$, or a geodesic interpolation where we travel along a geodesic between $\rho^{\tau}_k$ and $\rho^{\tau}_{k+1}$ in $\mathbb{W}_2(\Omega)$, constructed using the optimal transport map between these measures. Again, if $\mathcal{L}(x_0) < \infty$ and $\inf \mathcal{L} > - \infty$ then both interpolations converge uniformly to a limit curve $\Tilde{\rho}$ as $\tau$ goes to zero. Under further conditions on $\mathcal{L}$, mainly $\lambda$-geodesic convexity (i.e. $\lambda$-convexity along geodesics) for $\lambda > 0$, we can prove stability and convergence of $\Tilde{\rho}(t)$ to a minimizer of $\mathcal{L}$ as $t \rightarrow \infty$, see \cite{santambrogio2015,santambrogio2016,ambrosio}.

\section{Proof of Proposition \ref{prop:existence}}\label{appsec:proof}

\begin{proof}
Take a minimizing sequence $(\Tilde{F}_i, \Tilde{T}_i)$, i.e. such that $\mathcal{C}(\Tilde{F}_i, \Tilde{T}_i) \rightarrow \min \mathcal{C}$, where $\mathcal{C} \geq 0$ is the target function in (\ref{eq:regularized-layerwise-problem-continuous}) and denote $\beta_i = \Tilde{T_{i}}_\sharp \rho_k^{\tau}$. Then by compacity $\Tilde{F}_i \rightarrow F^\star$ and $\beta_i \rightharpoonup \beta^\star$ in duality with $\mathcal{C}_b(\Omega)$ by Banach-Alaoglu. There exists $T^\star$ an optimal transport map between $\rho_k^{\tau}$ and $\beta^\star$. Then $\mathcal{C}(F^\star, T^\star) \leq \lim \mathcal{C}(\Tilde{F}_i, \Tilde{T}_i) = \min \mathcal{C}$ by continuity of $\mathcal{L}$ and because 
\begin{align*}
    \int_{\Omega} \| T^\star(x) - x \|^2 \diff \rho_k^{\tau}(x) &= W_2^2(\rho_k^{\tau}, \beta^\star) \\
    &= \lim W_2^2(\rho_k^{\tau}, \beta_i) \\
    &\leq \lim \int_{\Omega} \| \Tilde{T}_i(x) - x \|^2 \diff \rho_k^{\tau}(x)
\end{align*}
as $W_2$ metrizes weak convergence of measures. We take $(F_k^{\tau}, T_k^{\tau}) = (F^\star, T^\star)$. It is also immediate that for any minimizing pair, the transport map has to be optimal. Taking a minimizing sequence $(\Tilde{F}_i, \Tilde{v}^i)$ and the corresponding induced maps $\Tilde{T}_i$ we get the same result for Problem (\ref{eq:nn-wass-min-movement-joint-dyn}). Problems (\ref{eq:regularized-layerwise-problem-continuous}) and (\ref{eq:nn-wass-min-movement-joint-dyn}) are equivalent by the equivalence between Problems (\ref{eq:monge}) and (\ref{eq:brenier2}).
\end{proof}

\section{Varying the regularization weight}\label{appsec:weight}

The discussion in Section \ref{subsec:gflow} suggests taking a fixed weight $\tau$ for the transport cost that is as small as possible. However, instead of using a fixed $\tau$, we might want to vary it along the depth $k$ to further constrain with a smaller $\tau_k$ the earlier modules to avoid that they overfit or the later modules to maintain the accuracy of earlier modules. We might also want to regularize the network further in earlier epochs when the data is more entangled. To unify and formalize this varying weight $\tau_{k,i}$ across modules $k$ and SGD iterations $i$, we use a scheme inspired by the method of multipliers to solve Problems (\ref{eq:regularized-layerwise-problem}) and (\ref{eq:regularized-modulewise-problem}). To simplify the notations, we will instead consider the weight $\lambda_{k,i} {:=} 2 \tau_{k,i}$ given to the loss. We denote $\theta_{k,i}$ the parameters of both $T_k$ and $F_k$ at SGD iteration $i$. We also denote $L(\theta,x)$ and $W(\theta,x)$ respectively the loss and the transport regularization as functions of parameters $\theta$ and data $x$. We now increase the weight $\lambda_{k,i}$ of the loss every $s$ iterations of SGD by a value that is proportional to the current loss. Given increase factor $h {>} 0$, initial parameters $\theta_{k,1}$, initial weights $\lambda_{k,1} {\geq} 0$, learning rates $(\eta_i)$ and batches $(x_i)$, we apply for module $k$ and $i {\geq} 1$:
\begin{equation*}
\label{eq:uzawa}
\begin{cases}
\theta_{k,i+1} &= \theta_{k,i} - \eta_i \nabla_\theta (\lambda_{k,i} \ L(\theta_{k,i},x_i) + W(\theta_{k,i},x_i))  \\
\lambda_{k,i+1} &= \lambda_{k,i} +  h L(\theta_{k,i+1}, x_{i+1}) \text{ if } i \text{ mod } s = 0 \text{ else } \lambda_{k,i}
\end{cases}
\end{equation*}
The weights $\lambda_{k,i}$ will vary along modules $k$ because they will evolve differently with iterations $i$ for each $k$. They will increase more slowly with $i$ for larger $k$ because deeper modules will have smaller loss. This method can be seen as a method of multipliers for the problem of minimizing the transport under the constraint of zero loss. Therefore it is immediate by slightly adapting the proof of Proposition \ref{prop:existence} or from \cite{lap} that we are still solving a problem that admits a solution whose non-auxiliary part is an optimal transport map with the same regularity as stated above. We use the same initial value $\lambda_1 = \lambda_{k,1}$ for all modules so that this method requires choosing three hyper-parameters ($h$, $s$ and $\lambda_1$). In practice (see Section \ref{subsec:exp-par} and Appendix \ref{appsec:imp}), it works best in only one experiment. Simply manually finding a value of $\tau$ that is multiplied by 2 for the second half of the network works best in all the other experiments.

\section{Implementation details}\label{appsec:imp}

We use standard data augmentation and standard implementations for VGG-19, ResNet-50, ResNet-101, ResNet-110, ResNet-152 and Swin-Tiny Transformer (the same as for the other methods in Section \ref{subsec:exp-par}). We use NVIDIA Tesla V100 16GB GPUs for the experiments. Training a Resnet-152 on TinyImageNet in Table \ref{tab:par-tinyimagenet-4-modules} takes about 36 hours. Training a Resnet-152 on CIFAR100 in Table \ref{tab:par-tinyimagenet-4-modules} takes about 11 hours. Training a ResNet-110 on STL10 in Table \ref{tab:par-stl} takes about 3 hours. Training a Swin-Tiny Transformer in Table \ref{tab:par-swin} take between 40 minutes and 1 hour. 
 
For sequential and multi-lap sequential training, we use SGD with a learning rate of $0.007$. With the exception of the Swin Transformer in Table \ref{tab:par-swin}, we use SGD for parallel training with learning rate of $0.003$ in all Tables but Table \ref{tab:par-stl} where the learning rate is $0.002$. For the Swin Transformer in  Table \ref{tab:par-swin}, we use the AdamW optimizer with a learning rate of $0.007$ and a CosineLR scheduler.

For end-to-end training we use a learning rate of 0.1 that is divided by five at epochs 120, 160 and 200. Momentum is always 0.9. For parallel and end-to-end training, we train for 300 epochs. For sequential and multi-lap sequential training, the number of epochs varies per module (see Section \ref{appsec:exp}). 

For experiments in Section \ref{subsec:exp-par}, we use a batch size of 256, orthogonal initialization \cite{orth} with a gain of 0.1, label smoothing of 0.1 and weight decay of 0.0002. The batch size changes to 64 for Table \ref{tab:par-stl} and to 1024 for Table \ref{tab:par-swin}.

For experiments in Section \ref{subsec:exp-seq}, we use a batch size of 128, orthogonal initialization with a gain of 0.05, no label smoothing and weight decay of 0.0001.

In Table \ref{tab:par-tinyimagenet-4-modules}, we use $\tau = 500000$ for the first two modules and then double it for the last two modules for TRGL. In Table \ref{tab:par-cifar100-2-modules}, we use $\lambda_{k,1} = 1, h = 1$ and $s = 50$ for TRGL. In Table \ref{tab:par-stl}, we use $\tau = 0.5$ and double it at the midpoint, expect for the first line where $\tau = 50$.

\section{Memory savings}\label{appsec:memory}

We compare in Table \ref{tab:memory-stl2} the memory consumption of our method to that of InfoPro \cite{infopro} on a ResNet-110 split into $K$ modules trained in parallel on STL10 with a batch size of 64 (so the same setting as in Table \ref{tab:par-stl} in Section \ref{subsec:exp-par}). We report in Table \ref{tab:memory-stl2} the memory saved as a percentage of the 6230 MiB memory required by end-to-end training with the same batch size. VanGL refers to our architecture trained without the regularization. InfoPro \cite{infopro} also propose to split the network into $K$ modules that have the same weight but not necessarily the same number of layers. They only implement this for $K {\leq} 4$ modules. When the modules are even in weight and not in depth, we call the training methods VanGL*, TRGL* and InfoPro*. In practice, this leads to shallower early modules which slightly hurts performance according to \cite{sedona}. We verify this in Table \ref{tab:par-stl-*} (to be compared with Table \ref{tab:par-stl} in Section \ref{subsec:exp-par}). However, TRGL* still outperforms InfoPro and end-to-end training, and it leads to even bigger memory savings. We see in Table \ref{tab:memory-stl2} that TRGL saves more memory than InfoPro in two out of three cases (4 and 8 modules), and about the same in the third case (16 modules), with much better test accuracy in all cases. Likewise, TRGL* is lighter than InfoPro*, with better accuracy. We also see that the added memory cost of the regularization compared to vanilla greedy learning is small.

\begin{table}[H]
\footnotesize
\caption{Memory savings using a ResNet-110 on STL10 split into $K$ modules trained in parallel with a batch size of 64, as a percentage of the weight of end-to-end training. Average test accuracy over 5 runs is between brackets. Test accuracy of end-to-end training is 77.73$\%$.}
\label{tab:memory-stl2}
\vskip 0.1in
\centering
\begin{tabular}{ccccccc}
\toprule
\multicolumn{1}{c}{} & \multicolumn{3}{c}{Equally deep modules} & \multicolumn{3}{c}{Equally heavy modules} \\
\cmidrule(r){2-4} \cmidrule(r){5-7}
$K$ & Par VanGL & Par TRGL (ours) & InfoPro & Par VanGL* & Par TRGL* (ours) & InfoPro* \\ 
\midrule
 4 & 27$\%$ (77.11) & 24$\%$ (\textbf{79.72}) & 18$\%$ (78.72) & 41$\%$ (77.14) & 39$\%$ (\textbf{78.94}) & 33$\%$ (78.78)  \\ 
 8 & 50$\%$ (75.71) & 48$\%$ (\textbf{77.82}) & 37$\%$ (76.40) &   \\
16 & 61$\%$ (73.57) & 58$\%$ (\textbf{77.22}) & 59$\%$ (73.95) & \\
\bottomrule
\end{tabular}
\end{table}

\begin{table}[H]
\footnotesize
\caption{Test accuracy of parallel (Par) TRGL* with $K$ modules (average and 95$\%$ confidence interval over 5 runs) on a ResNet-110 trained on STL10, compared to InfoPro* and E2E training from Table 3 in \cite{infopro}}
\label{tab:par-stl-*}
\vskip 0.1in
\centering
\begin{tabular}{ccccc}
\toprule
$K$ & Par VanGL* & Par TRGL* (ours) & InfoPro*  \\ 
\midrule
2 & 79.05 $\pm$ 1.33 & \textbf{79.47} $\pm$ 1.36 & 79.05 $\pm$ 0.57 \\ 
4 & 77.14 $\pm$ 1.23 & \textbf{78.94} $\pm$ 1.13 & 78.78 $\pm$ 0.72 \\ 
\bottomrule
\end{tabular}
\end{table}

Note that methods DDG \cite{ddg} and FR \cite{features-replay}, being delayed gradient methods and not module-wise training methods, do no save memory (they actually increase memory usage, see FR \cite{features-replay}). Sedona \cite{sedona} also does not claim to save memory, as their first module (the heaviest) is deeper than the others, but rather to speed up computation. Finally, DGL \cite{belilovsky2} is architecture-wise essentially identical to VanGL and consumes the same memory.

\section{Sequential full block-wise training}\label{appsec:exp}

To show that our method works well with all types of module-wise training when using few modules, we train a ResNet-101 split in 2 modules on CIFAR100, sequentially and multi-lap sequentially. Results are in Table \ref{tab:seq-mls-cifar100-2-modules}. We see that our idea of multi-lap sequential training adds one percentage point of accuracy to sequential training, and that the regularization further improves the accuracy by about half a percentage point. As only one module has to be trained at a time, these two training methods require only around half the memory end-to-end training requires (the size of the heaviest module and its classifier more exactly).

\begin{table}[H]
\footnotesize
\caption{Test accuracy of sequential (Seq) and multi-lap sequential (MLS) TRGL and VanGL with 2 modules on CIFAR100 using ResNet-101 (average of 2 runs).}
\label{tab:seq-mls-cifar100-2-modules}
\begin{center}
\begin{tabular}{cccc}
\toprule
Seq VanGL & Seq TRGL & MLS VanGL & MLS TRGL  \\ 
\midrule
73.31 & 73.61 & 74.34 & \textbf{74.78}  \\ 
\bottomrule
\end{tabular}
\end{center}
\end{table}

We now focus on full block-wise training, meaning that each module is a single ResBlock, mostly sequentially. We propose here to use shallower and initially wider ResNets with a downsampling and 256 filters initially and a further downsampling and doubling of the number of filters at the midpoint, no matter the depth. In these ResNets, we use the ResBlock from \cite{resnet2} with two convolutional layers. If such a network is divided in $K$ modules of $M$ ResBlocks each, we call the network a ${K{-}M}$ ResNet. These wider shallower architectures are well-adapted to layer-wise training as seen in \cite{belilovsky1}. We check in Table \ref{tab:par-cifar10-2-modules}  that this architecture works well with parallel module-wise training by comparing favorably on CIFAR10 (\cite{cifar}) a 2-7 ResNet with DGL, InfoPro (\cite{infopro}) and DDG \cite{ddg}. The 2-7 ResNet has 45 millions parameters, which is about the same as the ResNet-110 divided in two used by the other methods, and performs better when trained in parallel.

\begin{table}[H]
\caption{Average test accuracy and 95$\%$ confidence interval of 2-7 ResNet over 10 runs on CIFAR10 with parallel TRGL and VanGL, compared to DGL and DDG from \cite{belilovsky2} and InfoPro from \cite{infopro} that split a ResNet-110 in 2 module-wise-parallel-trained modules.}
\label{tab:par-cifar10-2-modules}
\begin{center}
\begin{tabular}{ccccc}
\toprule
Parallel VanGL (ours) & Parallel TRGL (ours) & DGL & DDG & InfoPro \\ 
\midrule 
94.01 $\pm$ .17 & \textbf{94.05} $\pm$ .18 & 93.50 & 93.41 & 93.58 \\  
\bottomrule
\end{tabular}
\end{center}
\end{table}

We now train a 10-block ResNet block-wise on CIFAR100 (a 10-1 ResNet in our notations). We report even the small improvements in accuracy to show that our method works in all settings (parallel or sequential with many or few splits), which other methods don't do. For sequential training, block $k$ is trained for $50 {+} 10 k$ epochs where $0 {\leq} k {\leq} 10$, block 0 being the encoder. This idea of increasing the number of epochs along with the depth is found in \cite{cascade}. For MLS training, block $k$ is trained for ${10 {+} 2 k}$ epochs, and this is repeated for 5 laps. In block-wise training, the last block does not always perform the best and we report the accuracy of the best block. In Table \ref{tab:cifar100-resnet-10-1-seq}, we see that MLS training improves the test accuracy of sequential training by around $0.8$ percentage points when the training dataset is full, but works less well on small training sets. Of the two, the regularization mainly improves the test accuracy of MLS training. The improvement increases as the training set gets smaller and reaches 1 percentage point. That is also the case for parallel module-wise training in Table \ref{tab:cifar100-resnet-10-1-par}, which already performs quite close to end-to-end training in the full data regime and much better in the small data regime. Combining the multi-lap trick and the regularization improves the performance of sequential training by 1.2 percentage points. 

\begin{table}[H]
\footnotesize
\caption{Average highest test accuracy and 95$\%$ confidence interval of 10-1 ResNet over 10 runs on CIFAR100 with different train sizes and sequential (Seq), multi-lap sequential (MLS) and parallel (Par) TRGL and VanGL, compared to E2E.}
\label{tab:cifar100-resnet-10-1-seq}
\begin{center}
\begin{tabular}{cccccc}
\toprule
Train size & Seq VanGL & Seq TRGL & MLS VanGL & MLS TRGL & E2E \\ 
\midrule
50000 & 68.74 $\pm$ 0.45 & \textbf{68.79} $\pm$ 0.56 & 69.48 $\pm$ 0.53 & \textbf{69.95} $\pm$ 0.50 & 75.85 $\pm$ 0.70 \\ 
25000 & 60.48 $\pm$ 0.15 & \textbf{60.59} $\pm$ 0.14 & 61.33 $\pm$ 0.23 & \textbf{61.71} $\pm$ 0.32  & 65.36 $\pm$ 0.31 \\ 
12500 & 51.64 $\pm$ 0.33 & \textbf{51.74} $\pm$ 0.26 & 51.30 $\pm$ 0.22 & \textbf{51.89} $\pm$ 0.30 & 52.39 $\pm$ 0.97 \\ 
5000 & 36.37 $\pm$ 0.33 & \textbf{36.40} $\pm$ 0.40 & 33.68 $\pm$ 0.48 & \textbf{34.61} $\pm$ 0.59  & 36.38 $\pm$ 0.31 \\ 
\bottomrule
\end{tabular}
\end{center}
\end{table}

\begin{table}[H]
\footnotesize
\caption{Average highest test accuracy and 95$\%$ confidence interval of 10-1 ResNet over 10 runs on CIFAR100 with different train sizes and sequential (Seq), multi-lap sequential (MLS) and parallel (Par) TRGL and VanGL, compared to E2E.}
\label{tab:cifar100-resnet-10-1-par}
\begin{center}
\begin{tabular}{cccc}
\toprule
Train size & Par VanGL & Par TRGL & E2E \\ 
\midrule
50000 & 72.59 $\pm$ 0.40 & \textbf{72.63} $\pm$ 0.40 & 75.85 $\pm$ 0.70 \\ 
25000 & 64.84 $\pm$ 0.19 & \textbf{65.01} $\pm$ 0.27 & 65.36 $\pm$ 0.31 \\ 
12500 & 55.13 $\pm$ 0.24 & \textbf{55.40} $\pm$ 0.35 & 52.39 $\pm$ 0.97 \\ 
5000  & 39.45 $\pm$ 0.23 & \textbf{40.36} $\pm$ 0.23 & 36.38 $\pm$ 0.31 \\ 
\bottomrule
\end{tabular}
\end{center}
\end{table}

We report further results of block-wise training on CIFAR10 in Table \ref{tab:seq-cifar10-resnet-10-1-last} and on MNIST \cite{mnist} in Table \ref{tab:par-mnist-resnet-20-1}, but now we report the accuracy of the last block. We see again greater improvement due to the regularization as the training set gets smaller, gaining up to 6 percentage points. 

\begin{table}[H]
\footnotesize
\caption{Average last block test accuracy and 95$\%$ confidence interval of 10-1 ResNet over 10 runs on CIFAR10 with different train sizes and sequential TRGL and VanGL, compared to E2E.}
\label{tab:seq-cifar10-resnet-10-1-last}
\begin{center}
\begin{tabular}{cccc}
\toprule
Train size & Seq VanGL & Seq TRGL & E2E \\ 
\midrule
50000 & 88.02 $\pm$ .18 & \textbf{88.20} $\pm$ .24 & 91.88 $\pm$ .18 \\ 
25000 & 83.95 $\pm$ .13 & \textbf{84.28} $\pm$ .22 & 88.75 $\pm$ .27 \\  
10000 & 76.00 $\pm$ .39 & \textbf{77.18} $\pm$ .34 & 82.61 $\pm$ .35 \\  
5000 & 67.74 $\pm$ .49 & \textbf{69.67} $\pm$ .44 & 73.93 $\pm$ .67 \\  
1000 & 45.67 $\pm$ .88 & \textbf{51.34} $\pm$ .90 & 50.63 $\pm$ .98 \\  
\bottomrule
\end{tabular}
\end{center}
\end{table}

\begin{table}[H]
\footnotesize
\caption{Average last block test accuracy and 95$\%$ confidence interval of 20-1 ResNet (32 filters, fixed encoder, same classifier) over 20/50 runs on MNIST with different train sizes and parallel TRGL and VanGL, compared to E2E.}
\label{tab:par-mnist-resnet-20-1}
\begin{center}
\begin{tabular}{cccc}
\toprule
Train size & Par VanGL & Par TRGL & E2E \\ 
\midrule
60000 & 99.07 $\pm$ .04 & \textbf{99.08} $\pm$ .04 & 99.30 $\pm$ .03 \\ 
30000 & 98.90 $\pm$ .05 & \textbf{98.93} $\pm$ .06 & 99.22 $\pm$ .03 \\ 
12000 & 98.52 $\pm$ .06 & \textbf{98.59} $\pm$ .06 & 98.96 $\pm$ .06 \\ 
6000 & 98.05 $\pm$ .09 & \textbf{98.16} $\pm$ .07 & 98.62 $\pm$ .06 \\
1500 & 96.34 $\pm$ .12 & \textbf{96.91} $\pm$ .07 & 97.19 $\pm$ .08 \\ 
1200 & 95.80 $\pm$ .12 & \textbf{96.58} $\pm$ .09 & 96.88 $\pm$ .09 \\
600 & 91.35 $\pm$ .99 & \textbf{95.16} $\pm$ .15 & 95.30 $\pm$ .17 \\ 
300 & 89.81 $\pm$ .73 & \textbf{92.86} $\pm$ .24 & 92.87 $\pm$ .28 \\ 
150 & 81.84 $\pm$ 1.22 & \textbf{87.48} $\pm$ .42 & 87.82 $\pm$ .59 \\ 
\bottomrule
\end{tabular}
\end{center}
\end{table}

The $88\%$ accuracy of sequential training on CIFAR10 in Table \ref{tab:seq-cifar10-resnet-10-1-last} is the same as for sequential training in table 2 of \cite{belilovsky1}, which is the best method for layer-wise sequential training available, with VGG networks of comparable depth and width.

\begin{figure}[H]
\begin{center}
\includegraphics[width=\textwidth]{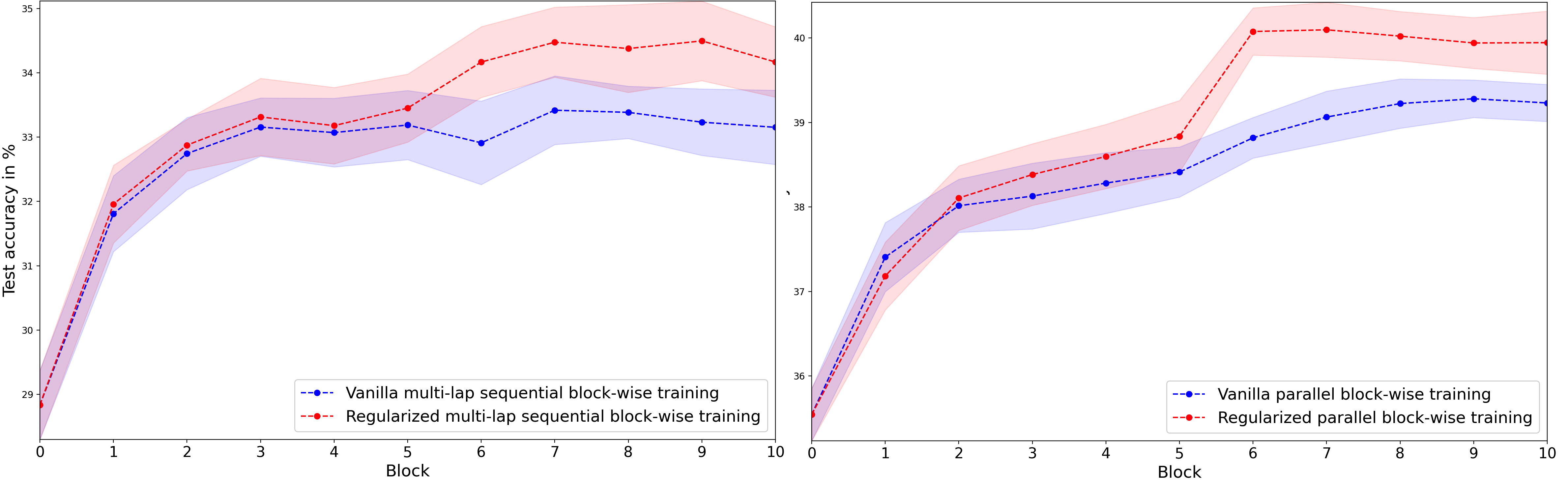}
\end{center}
\caption{Test accuracy after each block of 10-1 ResNet averaged over 10 runs with $95\%$ confidence intervals. Left: multi-lap sequential vanilla (VanGL, in blue) and regularized (TRGL, in red) block-wise training on $10\%$ of the CIFAR100 training set. Right: parallel vanilla (VanGL, in blue) and regularized (TRGL, in red) block-wise training on $10\%$ of CIFAR100 training set.}
\label{fig:mls-par}
\end{figure}

\section{Sensitivity to hyper-parameters}\label{appsec:sensitivity}

We show in Figure \ref{fig:sensitivity} below that TRGL still performs better than VanGL (in the same setting as in Table \ref{tab:par-stl} in Section \ref{subsec:exp-par}) for values of $\tau$ from 0.03 to 100 and is still roughly equivalent to it for values up to 5000.

\begin{figure}[H]
\begin{center}
\includegraphics[width=\textwidth]{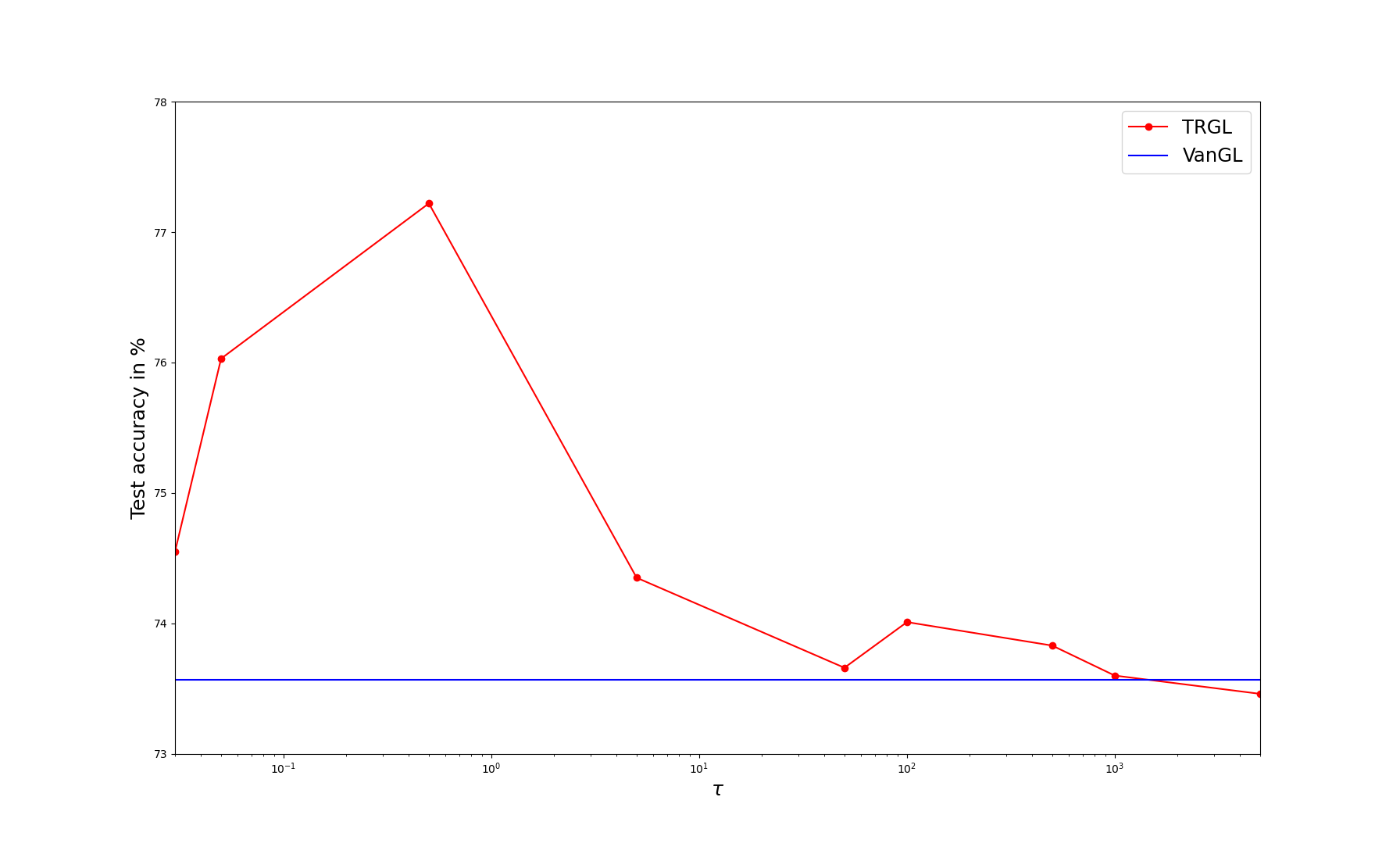}
\end{center}
\caption{Average test accuracy over 5 runs of parallel TRGL using a ResNet110 on STL10 with 16 modules with different values of $\tau$ (in red), and of VanGL (blue line).}
\label{fig:sensitivity}
\end{figure}

\section{Broader impact}\label{appsec:impact}

Less memory usage has a positive environmental impact and allows organizations with less resources to use deep learning.

\end{document}